\newtheorem{theorem}{Theorem}
\newtheorem{assumption}{Assumption}
\def\eqref#1{equation~\ref{#1}}
\def\1{\bm{1}}
\def\eps{{\epsilon}}
\DeclareMathAlphabet{\mathsfit}{\encodingdefault}{\sfdefault}{m}{sl}
\SetMathAlphabet{\mathsfit}{bold}{\encodingdefault}{\sfdefault}{bx}{n}
\def\gM{{\mathcal{M}}}
\def\sR{{\mathbb{R}}}
\def\1{\mathds{1}}
\DeclareMathOperator*{\argmax}{arg\,max}
\title{Local Environment Poisoning Attacks on Federated Reinforcement Learning}
\author{
    Written by AAAI Press Staff\textsuperscript{\rm 1}\thanks{With help from the AAAI Publications Committee.}\\
    AAAI Style Contributions by Pater Patel Schneider,
    Sunil Issar,\\
    J. Scott Penberthy,
    George Ferguson,
    Hans Guesgen,
    Francisco Cruz\equalcontrib,
    Marc Pujol-Gonzalez\equalcontrib
}
\begin{document}

\maketitle

\begin{abstract}
Federated learning (FL) has become a popular tool for solving traditional Reinforcement Learning (RL) tasks. The multi-agent structure addresses the major concern of data-hungry in traditional RL, while the federated mechanism protects the data privacy of individual agents. However, the federated mechanism also exposes the system to poisoning by malicious agents that can mislead the trained policy. Despite the advantage brought by FL, the vulnerability of Federated Reinforcement Learning (FRL) has not been well-studied before. In this work, we propose a general framework to characterize FRL poisoning as an optimization problem and design a poisoning protocol that can be applied to policy-based FRL. Our framework can also be extended to FRL with actor-critic as a local RL algorithm by training a pair of private and public critics. We provably show that our method can strictly hurt the global objective. We verify our poisoning effectiveness by conducting extensive experiments targeting mainstream RL algorithms and over various RL OpenAI Gym environments covering a wide range of difficulty levels. Within these experiments, we compare clean and baseline poisoning methods against our proposed framework. The results show that the proposed framework is successful in poisoning FRL systems and reducing performance across various environments and does so more effectively than baseline methods. Our work provides new insights into the vulnerability of FL in RL training and poses new challenges for designing robust FRL algorithms.
\end{abstract}

\section{Introduction}

In recent years, \textit{Reinforcement Learning} (RL) has gained popularity as a paradigm for solving complex sequential decision-making problems and has been applied to a wide range of real-world problems, including game playing \citet{silver2016mastering,vinyals2019grandmaster}, autonomous driving \citet{yurtsever2020survey}, and network security \citet{xiao2018security}. In RL, the agent's goal is to learn an optimal policy that maximizes the long-term cumulative rewards, which is done by repeatedly interacting with a stochastic environment, taking actions, and receiving feedback in the form of rewards. However, despite the impressive performance of RL algorithms, they are notoriously known to be data-hungry, often suffering from poor sample efficiency \citet{dulac2021challenges,schwarzer2020data}. One traditional solution to this challenge is \textit{Parallel RL}~\citet{kretchmar2002parallel}, which adopts multiple parallel RL agents that sample data from the environment and share it with a central server, as seen in practical implementations such as game-playing \citet{mnih2016asynchronous,berner2019dota}. However, transferring raw data may not be feasible: on the one hand, it can cause significant communication costs in applications, such as the Internet of Things (IoT) \citet{wang2020federated}; On the other hand, it is not suitable for privacy-sensitive industries, such as clinical decision support \citet{liu2020reinforcement}.

In order to address the limitations of traditional parallel RL, the concept of \textit{Federated Reinforcement Learning} (FRL)\citet{fan2021fault,khodadadian2022federated} has been proposed, inspired by the recent success of \textit{Federated Learning} (FL). FRL allows multiple agents to solve the same RL task collaboratively without sharing their sensitive raw data, thus addressing the drawbacks of heavy overhead and privacy violation in traditional Parallel RL. On the communication side, the communication efficiency of FRL is improved by the ability of FL to perform multiple local model updates during each communication round. On the privacy side, FRL enables data privacy protection by only communicating model updates, but not raw data, to a central server. With advantages of addressing communication efficiency and protecting privacy, FRL is practically appealing to a wide range of applications, including IoT \citet{wang2020federated}, autonomous driving \citet{liang2023federated}, robotics \citet{liu2019lifelong}, etc.

Despite the promising advantages brought by FL into RL, the \textit{security risk} is inherited from FL as well, since the lack of transparency of local training in FL naturally causes the vulnerability of the FRL system when exposed to adversarial attacks of malicious agents. While there is an extensive line of study on attacks in FL\citet{fang2020local,bagdasaryan2020backdoor,bhagoji2019analyzing,so2020byzantine}, it has yet to be explored in the context of FRL. In this work, our goal is to \textit{systematically study the vulnerability of FRL systems when facing adversarial attacks that attempt to mislead the trained policy}.

To address the vulnerability of FRL to adversarial attacks, we start by proposing a theoretical framework that characterizes the problem of environment poisoning in FRL as an optimization problem. The attacker is assumed to have control over a few corrupted agents and can manipulate their local environments by perturbing their observations during local policy training and updates. However, the attacker does not have prior knowledge of the underlying Markov Decision Process (MDP) of the environment and can only learn it through the agents' observations. This type of attack is practical and can be easily implemented in real-world scenarios, such as buying an IoT device participating in an FRL system, and providing false signals to its sensors. To assess this risk, we design a novel poisoning mechanism that targets FRL with policy-based local training. We first design the poisoning for the general case and extend it to the Actor-Critic setting in which the attacker trains a set of public and private critics to manipulate the coordinator's model during the training process. Our method is evaluated through extensive experiments on OpenGYM environments with various difficulty levels (CartPole, InvertedPendulum, LunarLander, Hopper, Walker2d, and HalfCheetah) using different policy-gradient RL models. The results demonstrate that a few malicious agents can successfully poison the entire FRL system. Furthermore, we affirm the effectiveness of our attack model compared to baseline attacks. Our findings highlight the potential risks of FRL under adversarial attacks and inspire future research toward developing more robust algorithms.

\medskip
\noindent
\textbf{Contributions.} Our results show that the FRL framework is vulnerable to local poisoning attacks. Concretely,
\begin{itemize}
    \item We propose a novel poisoning protocol designed specifically for policy-based FRL and extend it to the Actor-Critic case, where the attacker trains public and private critics to manipulate the objective of the global model broadcast by the coordinator at each communication round.
    \item We provably show that our method can sabotage the system by degrading the global objective.
    \item We verify the effectiveness of our poisoning protocol by conducting extensive experiments targeting mainstream RL algorithms, including VPG and PPO, across various OpenGYM environments with varying difficulty levels, as discussed in Section \ref{sec:Eval}. This includes comparison with baseline and targeted attacks.
\end{itemize}
In summary, our work indicates that when FL is applied to RL training, there is a risk of exposing the system to malicious agents, making it susceptible to poisoning and threats in applications. Consequently, the trained policy of FRL may not be entirely reliable and requires a more robust and secure protocol.  

\medskip
\noindent
\textbf{Related work.}
\textit{Adversarial Attack in FL} has been analyzed in different settings  \citet{fang2020local,bagdasaryan2020backdoor,bhagoji2019analyzing,so2020byzantine}. Specifically, \citet{tolpegin2020data} studies data poisoning that compromises the integrity of the training dataset while \citet{bhagoji2019analyzing} and \citet{fang2020local} study manipulation directly on the local model parameters submitted by corrupted agents. However, these studies are under the context of Supervised Learning (SL), which is substantially different from RL in the sense of the availability of future data, the knowledge of the dynamics of the environment, etc~\citet{sun2020vulnerability}. Therefore, the existing works on FL poisoning are not directly applicable to FRL, while we focus on the poisoning designed specifically for FRL, taking into account the unique characteristics of RL.

\textit{Poisoning in RL} refers to committing attacks during the training process~\cite {zhang2020adaptive}. Poisoning attacks in RL can be categorized into two types: weak attacks that only poison data (e.g., rewards and states) and strong attacks that can manipulate actions in addition to data \citet{panagiota2020trojdrl}. In this study, we focus on weak attacks, also known as environmental poisoning, as they allow easier access for the attacker, therefore, are more likely to occur in real-world scenarios. \citet{rakhsha2020policy} formulated optimization frameworks to characterize RL poisoning, but they have limitations, such as requiring knowledge of the underlying MDP and focusing on targeted poisoning. Our proposed framework, however, considers both targeted and untargeted poisoning under the realistic assumption that the attacker does not have access to the MDP dynamics. \citet{sun2020vulnerability} designs a vulnerability-aware poisoning method for RL. Their algorithm focuses on manipulating the actor model, which cannot be directly applied to FRL as the critic model is also communicated among agents and the server. In contrast, our proposed poisoning mechanism is specifically designed for FRL and focuses on manipulating the critic model by training a pair of public and private critics.

\textit{Federated Reinforcement Learning (FRL)} has become a promising topic in recent years and inspired a series of works both on the application side~\citet{liu2019lifelong,liang2023federated,wang2020federated} and theoretical side~\citet{fan2021fault,khodadadian2022federated}. \citet{khodadadian2022federated} proves a linear convergence rate under Markovian Sampling. \citet{fan2021fault} theoretically analyzes the fault-tolerance guarantee for multi-task FRL, and \citet{anwar2021multi} examines the performance of common RL adversaries applied to FRL. Although these works study adversarial attacks, they do not allow multiple local updates, which will increase the communication costs severely in applications. In contrast, our poisoning method is designed for FRL taking into account multiple steps of local training.

\section{Preliminaries and Notations} 
\label{sec:prelim}
In this section, we overview some background and notations that are necessary for introducing the concept of poisoning in FRL. We consider single-task FRL, where a number of agents work together to achieve a common task. As such, all agents are trained on the same MDP. We consider the ubiquitous on-policy training setting \citep{singh2000convergence}.

\smallskip
\noindent
\textbf{MDP and RL.} 
An MDP is a discrete stochastic control process for decision-making~\citep{puterman1990markov} that is defined by a tuple $M = (S,A,r,P,\gamma)$, where $S$ is a discrete state space, $A$ is a discrete action space, $r(\cdot):S\times A \rightarrow \mathbb{R}$ is a reward function, $P(\cdot):S\times A \times S \rightarrow [0,1]$ is a state transition probability function, and $\gamma \in (0,1)$ is a discount factor. Given an MDP, the goal of RL is to find an optimal policy, $\pi(\cdot) : S \rightarrow \Delta_A$, where $\Delta_A$ is the set of all probability distributions over the action space $A$, which maximizes the expected accumulated discounted reward. As is common in the literature \citep{agarwal2021theory}, we often represent a policy $\pi$ by its parametrization $\theta$ (e.g., tabular parametrization or neural network weight parametrization). During the process, at each step $t$, the decision maker begins in some state $s_t$, selects an action $a_t$ according to the policy $\pi(s_t)$, receives a reward $r(s_t,a_t)$, and transitions to the next state $s_{t+1}$ with probability $P(s_{t+1}|s_t,a_t)$. This decision-making and interaction process continues until the MDP terminates.




\noindent
\textbf{Federated Reinforcement Learning (FRL)}. An FRL system consists of $n$ agents and a central server. The agents perform local training for $L$ steps and then send their updated policies to the central server. The server performs aggregation to create a central policy, which is then broadcast back to all the agents. This process is repeated for $T$ rounds, and the broadcast policy is used to initialize the next round of local training. More specifically, during each round $p\leq T$, at the \textit{start} of local step $q\leq L$, denote the policy of each agent $(i)$ by its policy parameter $\theta_{(i)}^{p,q-1}$. The agent interacts with the environment according to its current policy $\theta_{(i)}^{p,q-1}$, and collects an observation sequence $\mathcal{O}_{(i)}^{p,q}$, which is composed of sequences of states, actions, and rewards: $\mathcal{O}_{(i)}^{p,q} = (\mathcal{O}_{(i)}^{s,p,q},\mathcal{O}_{(i)}^{a,p,q},\mathcal{O}_{(i)}^{r,p,q}) = ((s_1,s_2,\ldots),(a_1,a_2,\ldots), (r_1, r_2,\ldots))$. The agent then updates its policy parameters based on its observation sequence $\mathcal{O}_{(i)}^{p,q}$. The policy update is typically given by $\theta_{(i)}^{p,q} = \arg\max_{\theta}J(\theta,\theta_{(i)}^{p,q-1},\mathcal{O}_{(i)}^{p,q})$, where $J$ is some objective function. After completing $L$ steps of local training, each agent $(i)$ updates its local policy to $\theta_{(i)}^{p,L}$ and sends it to the server.\footnote{We distinguish the parameters related to the server by index $0$.} The server then forms a new global policy $\theta_{(0)}^{p}$, which is formulated as $\theta_{(0)}^p:=\mathcal{A}^{agg} (\theta_{(0)}^{p-1},\{\theta_{(i)}^{p,L}\}_{i=1}^n)$, where $\theta_{(0)}^{p-1}$ is the server's previous global model at the end of round $p-1$, and $\mathcal{A}^{agg}$ is an aggregation algorithm. The server then broadcasts the updated global policy $\theta_{(0)}^p$ to all agents, after which each agent $i$ updates its local policy, and the system proceeds to the next round $p+1$. The local policy update is in the form of 
$\theta_{(i)}^{p+1,0} = \mathcal{A}^{up}(\theta_{(i)}^{p,L}, \theta_{(0)}^{p})$, where $\mathcal{A}^{up}$ denotes a local update function.

\section{Problem Formulation}
\label{sec:PF}
 
\subsection{Local Environment Poisoning}
\label{susec:EP}
We consider a threat model to FRL systems called local environment poisoning, where we assume that the attacker can control a small number of corrupted agents and poison their local environments, i.e., perturbing their observation during local policy training/updates. More precisely, let us denote the set of corrupted agents that the attacker has control over them by $\mathcal{M} \subset[n]$, where $[n]=\{1,2,\ldots,n\}$. Then for any agent $j \in \mathcal{M}$, at each round $p$ and step $q$, the attacker may manipulate the observation of $\mathcal{O}_{(j)}^{p,q}$ to become $\widehat{\mathcal{O}}_{(j)}^{p,q}$ by possibly targeting the states, actions, or rewards. For instance, when the attacker targets rewards, we have $\widehat{\mathcal{O}}_{(n)}^{p,q} = (\mathcal{O}_{(n)}^{s,p,q}, \mathcal{O}_{(n)}^{a,p,q}, \widehat{\mathcal{O}}_{(n)}^{r,p,q})$. Throughout this paper, we will use the notation $\ \widehat{\cdot}\ $ to indicate variables that are poisoned by the attacker.


\subsection{A General Formulation}
\label{subsec:PF1}
We propose a theoretical framework to formulate the problem of environment poisoning attacks to FRL as an optimization problem with a limited budget. The problem of poisoning FRL is formulated as a sequential bi-level optimization in Problem (\ref{opt-P}), where the attacker's objective is to manipulate the learned policy by local environment poisoning attacks. The optimization problem is defined in terms of the notations and concepts introduced in Section \ref{sec:prelim}, such as the local policies, global policy, aggregation, and update algorithms.

\smallskip
\noindent
\textbf{Attacker's objective.} 
In the optimization problem (\ref{opt-P}), the objective $\mathcal{L}_A$ represents the loss of the attacker, 
which can characterize both untargeted and targeted poisoning settings. 
In the case of untargeted poisoning, 
$\mathcal{L}_A$ is the benefit of the server, 
typically represented by the long-term cumulated reward. 
In the case of targeted poisoning, 
$\mathcal{L}_A$ is a policy matrix distance, 
measuring the difference between the learned policy ${\theta}_{(0)}^{T}$ and some targeted policy $\theta^{\dagger}$. 
This captures the attacker's goal to manipulate the global model to align with a specifically targeted policy.

\begin{align}
     \arg\min_{\widehat{O}}&\ \mathcal{L}_{A}\Big({\theta}_{(0)}^{T} \Big| \big\{\widehat{\mathcal{O}}_{(n)}^{p,q}\big\}_{1 \leq p \leq T}^{1 \leq q \leq L}\Big)
     \tag{P}\label{opt-P}\ \\
    \text{s.t. } & \forall i \notin \mathcal{M}:\\
    & \theta_{(i)}^{p,0} = \mathcal{A}^{up}(\theta_{(i)}^{p-1,L}, {\theta}_{(0)}^{p-1})
    \label{eq:gen-i-init}\\
    &\theta_{(i)}^{p,q} = \argmax_{\theta} J(\theta, {\theta}_{(i)}^{p,q-1}, \mathcal{O}_{(i)}^{p, q-1})
    \label{eq:gen-i-local}\\
    & \forall j \in \mathcal{M}:\\
    & \theta_{(j)}^{p,0} = \mathcal{A}^{up}(\theta_{(j)}^{p-1,L} ,{\theta}_{(0)}^{p-1})
    \label{eq:gen-n-init}\\
    & {\theta}_{(j)}^{p,q} = \argmax_{\theta} J(\theta, {\theta}_{(j)}^{p,q-1}, \widehat{\mathcal{O}}_{(j)}^{p, q-1})
    \label{eq:gen-n-local}\\
    & D(\widehat{\mathcal{O}}_{(j)}^{p, q} , \mathcal{O}_{(j)}^{p, q})\leq \epsilon
    \label{eq:gen-bgt2}\\
    & {\theta}_{(0)}^{p} = \mathcal{A}^{agg}({\theta}_{(0)}^{p-1}, \{{\theta}_{(i)}^{p,L}\}_{i\in [n]}).
    \label{eq:gen-agg}
\end{align}

\noindent
\textbf{Constraints and knowledge.} The constraints in optimization (\ref{opt-P}) characterize the process of poisoning FRL, and their interpretations are summarized in Table \ref{tab:eq-gen}. The constraint (\ref{eq:gen-i-init}) describes how a clean agent $i$ initializes its policy for local training at the beginning of each round. The agent updates its model from $\theta_{(i)}^{p-1, L}$ to $\theta_{(i)}^{p,0}$ using an update algorithm $\mathcal{A}^{up}(\cdot,\cdot)$. This update algorithm takes the agent's current model $\theta_{(i)}^{p-1, L}$ and the global model broadcast by the server $\theta_{(0)}^{p-1}$ as inputs, and outputs the agent's new local model $\theta_{(i)}^{p,0}$. Constraint (\ref{eq:gen-i-local}) describes the local update of each clean agent at local step $q$ in round $p$. The agent uses its current policy, represented by $\theta_{(i)}^{p,q-1}$, to roll out an observation sequence $O_{(i)}^{p,q-1}$. Then, based on this observation, the agent updates its policy from $\theta_{(i)}^{p,q-1}$ to $\theta_{(i)}^{p,q}$ by maximizing an objective function $J(\cdot)$, defined by the agent's specific RL algorithm. Moreover, constraints (\ref{eq:gen-n-init}) and (\ref{eq:gen-n-local}) characterize the malicious agent's local initialization and update, respectively. Their interpretation is similar to the constraints (\ref{eq:gen-i-init}) and (\ref{eq:gen-i-local}), except that the local update of a malicious agent is based on the manipulated observation $\widehat{O}$. Constraint (\ref{eq:gen-bgt2}) captures the budget constraint for the attacker, where $D(\cdot, \cdot)$ represents the distance between the perturbed (poisoned) observations and the clean observations, which is restricted by a cost budget $\epsilon$. The attack power $\epsilon$ represents the degree of poisoning that is allowed, and it is used to define the limit of the change that can be made to the clean data. Finally, the constraint (\ref{eq:gen-agg}) models the aggregation step of the central server by an aggregation algorithm $\mathcal{A}^{agg}$. The inputs of $\mathcal{A}^{agg}$ are the server's current model $\theta_{(0)}^{p-1}$ and the models sent from agents $\{\theta_{(i)}^{p-1,L}\}$. The output of $\mathcal{A}^{agg}$ is the server's new global model $\theta_{(0)}^{p}$.

\begin{table}[h]
    \caption{Constraints of Problem (\ref{opt-P}).}
    \label{tab:eq-gen}
    \centering
    \begin{small}
    \begin{sc}
    \begin{tabular}{ll}
    \toprule
    Party & Constraints \\
    \midrule
    Agent $(i)$, $i\notin \mathcal{M}$ & Eq.(\ref{eq:gen-i-init}): Local initialization \\
     (Clean agents)& Eq.(\ref{eq:gen-i-local}): Local training \\
     \midrule
    Agent $(j)$, $j\in \mathcal{M}$ &  Eq.(\ref{eq:gen-n-init}): Local initialization\\
    (malicious agents)& Eq.(\ref{eq:gen-bgt2}): Attack Budget\\
    &Eq.(\ref{eq:gen-n-local}): Local training\\
    \midrule
    Coordinator & Eq.(\ref{eq:gen-agg}): aggregation \\
    \bottomrule
    \end{tabular}
    \end{sc}
    \end{small}
\end{table}

Table \ref{tab:knowledge} shows the knowledge of the three parties involved in FL, namely, the coordinator, the attacker, and the agents. The table clarifies the knowledge of each party to guarantee the data privacy expected from FL. The coordinator only has knowledge of the submitted models and the global policy, while the agents only have knowledge of their local data, policy, and the broadcast global model. The attacker only has knowledge of its own observations, manipulations, model, and the broadcast global policy. This ensures that the agents' private data is kept confidential.

\begin{table}[h]
    \caption{Knowledge of the parties in a poisoned FRL.}
    \label{tab:knowledge}
    \centering
    \resizebox{0.45\textwidth}{!}{ 
    \begin{small}
    \begin{sc}
    \begin{tabular}{lccc}
    \toprule
    & Coordinator & Agent $(i), i\notin \mathcal{M}$  & Agent $(j), j\in \mathcal{M}$  \\
    \midrule
     $\mathcal{L}_{\mathcal{A}}(\cdot)$ & & & $\surd$\\
     $J(\cdot)$ &   & $\surd$ & $\surd$\\
     $\mathcal{A}^{up}(\cdot)$ &&$\surd$&$\surd$\\
     $\mathcal{A}^{agg}(\cdot)$ &$\surd$ &&\\
    \midrule
     $\theta_{(0)}^p$& $\surd$ & $\surd$ & $\surd$\\
     $\theta_{(i)}^{p,q}, i\notin \mathcal{M}$ && $\surd$ &\\
     $\theta_{(j)}^{p,q}, j\in \mathcal{M}$ &&  & $\surd$\\
    \midrule
     $\epsilon$&& & $\surd$\\
    \midrule
     $\mathcal{O}_{(i)}^{p,q}, i\notin \mathcal{M}$ &&$\surd$ &\\
     $\mathcal{O}_{(j)}^{p,q}, j \in \mathcal{M}$ &&&$\surd$\\
     $\widehat{\mathcal{O}}_{(j)}^{p,q},j \in \mathcal{M}$ &&& $\surd$\\
    \bottomrule
    \end{tabular}
    \end{sc}
    \end{small}
    }
\end{table} 

We refer to Appendix A for a specification of this framework to the case of Proximal Policy Optimization ~\citep{schulman2017proximal}, which shows how our framework characterizes poisoning-FRL for actor-critic algorithms.
\section{Method}
\label{sec:method}
In this section, we propose practical local environment poisoning methods to FRL. We consider two scenarios when the local RL training/update in the FRL system uses actor-critic methods (Algorithm \ref{alg:poison}) and policy gradient methods (Algorithm \ref{alg:poison_2}). Here, we only describe untargeted reward poisoning methods as other variants can be easily derived from our general formulation in Section \ref{sec:PF}. Finally, in the Appendix, we provide a defense mechanism against the proposed poisoning protocols, which is based on assigning credits to each agent's policy according to its performance. To describe our poisoning methods, as before, we use $\ \widehat{\cdot}\ $ to denote the malicious agents' parameters communicated with the server.

\begin{algorithm}[H]
   \caption{Reward Poisoning with Actor-Critic}
   \label{alg:poison}
\begin{algorithmic}[1]
   \STATE {\bfseries Input}: max federated rounds $T$, max local episodes $L$, number of agents $n$, index set of malicious agents $\mathcal{M}$, poisoning budget $\eps$, aggregation algorithm $\mathcal{A}^{agg}$, actor-critic objective function $J$.\\
   \STATE {\bfseries Output}: server's actor and critic models $\theta_{(0)}^{T}$ and $\omega_{(0)}^{T}$. 
   \STATE Initialize the server's actor model $\theta_{(0)}^{0}$.
   \STATE Initialize the server's critic model $\omega_{(0)}^{0}$.
   \FOR{$p=1$ {\bfseries to} $T$}
   \FOR{$i=1$ {\bfseries to} $n$}
   \IF{$i\notin \mathcal{M}$}
   \STATE Initialize local critic $\omega_{(i)}^{p,0} \leftarrow \omega_{(0)}^{p-1}$
   \STATE Initialize local actor $\theta_{(i)}^{p,0} \leftarrow \theta_{(0)}^{p-1}$
   \ELSE
   \STATE (Attacker) Initialize local private critic $\omega_{(i)}^{p,0} \leftarrow \omega_{(i)}^{p-1,L}$
   \STATE Initialize local public critic $\widehat{\omega}_{(i)}^{p,0}
   \leftarrow \omega_{(0)}^{p-1}$\label{alg:public-critic-init}
   \STATE Initialize local actor $\widehat{\theta}_{(i)}^{p,0} \leftarrow \theta_{(0)}^{p-1}$\label{alg:private-critic-init}
   \ENDIF
   \FOR{$q=1$ {\bfseries to} $L$}
   \IF{$i \notin \mathcal{M}$}
   \STATE Interact with environment and obtain ${O}_{(i)}^{p,q}$
   \STATE Compute $J_{(i)}^{p,q}$ with ${O}_{(i)}^{p,q}$ and $\omega_{(i)}^{p,q-1}$ 
   \STATE Update ${\theta}_{(i)}^{p,q}$ with $J_{(i)}^{p,q}$
   \STATE Update ${\omega}_{(i)}^{p,q}$ with ${O}_{(i)}^{p,q}$
   
   \ELSE
   \STATE (Attacker) Interact with environment and obtain ${O}_{(i)}^{p,q}$
   \STATE Compute $J_{(i)}^{p,q}$ with ${O}_{(i)}^{p,q}$ and $\omega_{(i)}^{p,q-1}$
   \STATE Poison Reward as $\{\widehat{r}_t\}_{(i)}^{p,q}$ by Eq. (\ref{eq:poison_r})
   
   \STATE Obtain $\widehat{J}_{(i)}^{p,q}$ with $\widehat{O}_{(i)}^{p,q}$ and $\widehat{\omega}_{(i)}^{p,q-1}$ \label{lin:att-obs}
   
   \STATE Update $\widehat{\theta}_{(i)}^{p,q}$ with $\widehat{J}_{(i)}^{p,q}$
   \STATE Update $\widehat{\omega}_{(i)}^{p,q}$ with $\widehat{O}_{(i)}^{p,q}$
   \STATE Update ${\omega}_{(i)}^{p,q}$ with ${O}_{(i)}^{p,q}$
   
   \ENDIF
   \ENDFOR 
   \ENDFOR 
   \STATE $\theta_{(0)}^p = \mathcal{A}^{agg}\big(\{\widehat{\theta}_{(i)}^{p,L}\}_{i\in \mathcal{M}}\bigcup \{\theta_{(i)}^{p,L}\}_{i\notin \mathcal{M}}\big)$
   \STATE $\omega_{(0)}^p = \mathcal{A}^{agg}\big(\{\widehat{\omega}_{(i)}^{p,L}\}_{i\in \mathcal{M}}\bigcup \{\omega_{(i)}^{p,L}\}_{i\notin \mathcal{M}}\big)$ \label{alg:public-critic-upd}
   \ENDFOR 
\end{algorithmic}
\end{algorithm}

\subsection{Reward Poisoning with Local Actor-Critic Algorithms}
\label{subsec:method-poison}
In actor-critic algorithms~\citep{peters2008natural}, besides the policy $\theta$, each agent also uses a critic model $\phi_{\omega}(\cdot)$ parameterized by $\omega$ to estimate the value function $V_{\theta}(s)$ as $\overline{V}(s) = \phi_{\omega}(s)$. By abuse of notation, we again refer to the critic model $\phi_{\omega}$ by its parametrization $\omega$. This also gives an estimation of the $Q$-function as $\overline{Q}(s,a) = r(s,a) + \gamma \cdot \overline{V}(s')$, where $r(s,a)$ is the observed reward, and $s'$ is the next observed state. The critic model updates itself by minimizing the temporal-difference error~\citep{tesauro1995temporal}. The policy parameter $\theta$ is updated by $\theta^{t+1} =  \arg\max_{\theta}J(\theta,\theta^t,\mathcal{O}^{t})$, where $J(\cdot)$ is some objective function specified by the actor-critic algorithm and $\mathcal{O}^{t}$ is the observed trajectory. In the following, we describe the poisoning mechanism in detail, which is also summarized in Algorithm (\ref{alg:poison}).

\smallskip
\noindent
\textbf{Public and private adversarial critics.} 
In order to manipulate the global model by training and submitting a poisoned critic model, the attacker needs an unpoisoned critic to provide a true estimation of the value function (V-value) to make the best poison decision. To address this need, the attacker trains a pair of public and private critics, $\widehat{\omega}_{(j)}^{p,q}$ and ${\omega}_{(j)}^{p,q}$. The public critic is trained with poisoned rewards and used to communicate with the server, while the private critic is trained with ground-truth observations and kept to itself. The private critic is used to obtain a true estimation of the V-value, based on which the attacker can decide how to manipulate the rewards. Due to the different goals and training methods of the pair of critics, the initialization for the critics is also different. At the beginning of a new round of local training, the agent initializes its public critic with the broadcast global model (i.e., $\widehat{\omega}_{(j)}^{p,0} = {\omega}_{(0)}^{p-1}$), while inherits its private critic from the end of last round of training (i.e., ${\omega}_{(j)}^{p,0} = {\omega}_{(j)}^{p-1,L}$). 

\smallskip
\noindent
\textbf{Reward poisoning.}
In contrast to targeted poisoning, where the attacker forces the coordinator to learn a specific policy, in untargeted poisoning, the attacker aims to minimize the objective function $J$ of the policy model (the actor) by poisoning its local rewards. The actor is updated according to the poisoned objective $\widehat{J}$ obtained by poisoned rewards $\widehat{r}$ and public critic $\widehat{\omega}$. However, when deciding how to poison the rewards, the attacker should use the unpoisoned objective $J$ given by the ground-truth rewards $r$ and private critic $\omega$, from which the attacker obtains true estimation in order to make a right poisoning decision. The attacker poisons the reward $r$ received by the corrupted agent by minimizing the original objective $J$ with respect to $r$.

Concretely, for each round $p\leq T$ and local step $q\leq L$, the corrupted agent $j\in \mathcal{M}$ interacts with the environment and obtains the ground-truth observation ${O}_{(j)}^{p,q}$. The attacker computes the unpoisoned objective $J_{(j)}^{p,q}$ using $\{{r}_t\}_{(j)}^{p,q}$ and $\omega_{(j)}^{p,q}$. Then, the attacker poisons the reward by
\begin{align}
    \{\widehat{r}_t\}_{(j)}^{p,q}\leftarrow \{r_t\}_{(j)}^{p,q} -  \eps \cdot \frac{\nabla_{r}J_{(j)}^{p,q}}{\|\nabla_{r}J_{(j)}^{p,q}\|}
    \label{eq:poison_r},\
\end{align}
which guarantees that the manipulation power is within the attack budget $\epsilon$.

\subsection{Reward Poisoning with Local Policy Gradient (PG) Algorithms}
\label{subsec:method-poison2}
In Policy Gradient (PG) algorithms~\citep{silver2014deterministic}, agents do not require a critic model. Therefore, we have adapted the poisoning method described in Section \ref{subsec:method-poison} that uses Actor-Critic for agents' local RL algorithm. The attacker's goal is still to minimize the objective function $J$ of the policy model. Since there is no critic, the agent directly calculates $J$ from the observed rewards ${r}$ and uses this information to decide how to poison the rewards to ${\widehat{r}}$ by Eq. (\ref{eq:poison_r}). The policy is then updated with the poisoned objective $\widehat{J}$ calculated by the poisoned rewards. The overall procedure is outlined in Algorithm (\ref{alg:poison_2}).

\begin{algorithm}[H]
   \caption{Reward Poisoning with Policy Gradient}
   \label{alg:poison_2}
\begin{algorithmic}[1]
   \STATE {\bfseries Input}: max federated rounds $T$, max local episodes $L$, number of agents $n$, index set of malicious agents $\mathcal{M}$, poisoning budget $\eps$, aggregation algorithm $\mathcal{A}^{agg}$, policy gradient objective function $J$.
   \STATE {\bfseries Output}: server's policy $\theta_{(0)}^{T}$. 
   \STATE Initialize the server's policy $\theta_{(0)}^{0}$.
   \FOR{$p=1$ {\bfseries to} $T$}
   \FOR{$i=1$ {\bfseries to} $n$}
   \STATE Initialize local policy $\theta_{(i)}^{p,0} \leftarrow \theta_{(0)}^{p-1}$

   \FOR{$q=1$ {\bfseries to} $L$}
   \IF{$i \notin \mathcal{M}$}
   \STATE Interact with environment and obtain ${O}_{(i)}^{p,q}$
   \STATE Compute $J_{(i)}^{p,q}$ with ${O}_{(i)}^{p,q}$
   \STATE Update ${\theta}_{(i)}^{p,q}$ with $J_{(i)}^{p,q}$
   
   \ELSE
   \STATE (Attacker) Interact with environment and obtain ${O}_{(i)}^{p,q}$
   \STATE Compute $J_{(i)}^{p,q}$ with ${O}_{(i)}^{p,q}$
   \STATE Poison Reward as $\{\widehat{r}_t\}_{(i)}^{p,q}$ by Eq. (\ref{eq:poison_r})

   \STATE Obtain $\widehat{J}_{(i)}^{p,q}$ with $\widehat{O}_{(i)}^{p,q}$
   
   \STATE Update $\widehat{\theta}_{(i)}^{p,q}$ with $\widehat{J}_{(i)}^{p,q}$
   
   \ENDIF
   \ENDFOR 
   \ENDFOR 
   \STATE $\theta_{(0)}^p = \mathcal{A}^{agg}\big(\{\widehat{\theta}_{(i)}^{p,L}\}_{i\in \mathcal{M}} \bigcup \{\theta_{(i)}^{p,L}\}_{i\notin \mathcal{M}}\big)$
   \ENDFOR 
\end{algorithmic}
\end{algorithm}

\section{A Theoretical Performance Guarantee}
In this section, we prove that our poisoning method can strictly decrease the global objective under certain assumptions. We begin by considering the following assumption. 

\begin{figure*}[ht]
    \centering
    \includegraphics[height=1.29in]{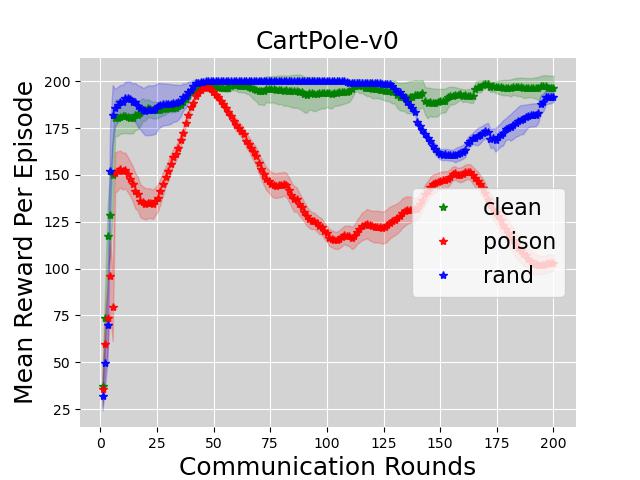}
    \includegraphics[height=1.29in]{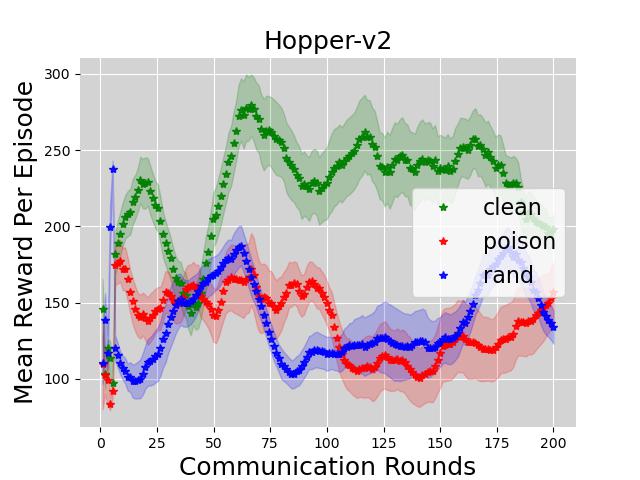}
    \includegraphics[height=1.29in]{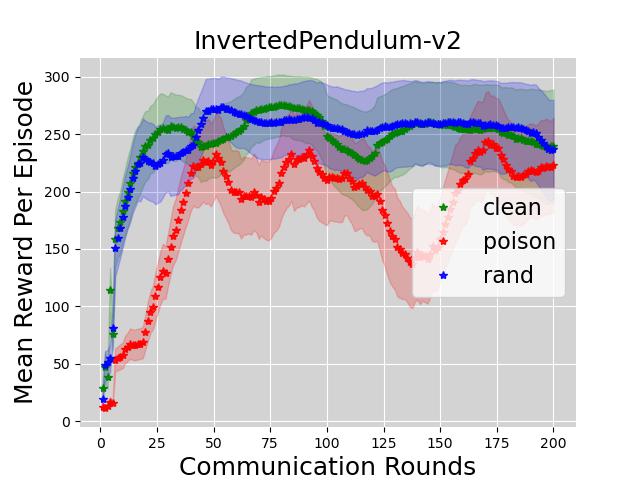}
    \includegraphics[height=1.29in]{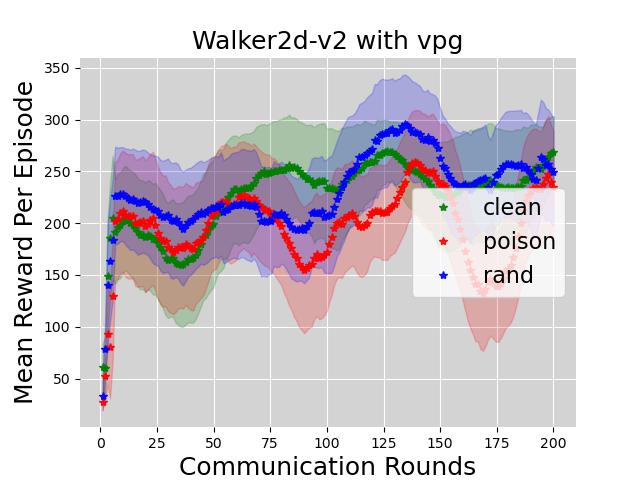}
    \caption{Contrastive performance for VPG FRL system. We plot performance of local environment poisoning against two baselines of clean training and random attack. We only report the largest size of the system that the attacker can successfully poison, which are 4 for the left two plots and 3 for the right two.}
    \label{fig:vpg-r1}
\end{figure*}

\begin{assumption}[Single-step local training, single-agent poisoning]
\label{asp:single}
    We assume that for each round, all local agents only apply single-step local training, i.e., $L=1$. Moreover, we assume that only agent $(n)$ is poisoned, i.e., $\gM = \{n\}$.
\end{assumption}

\noindent
\textbf{Clean local reward sequence function.} Let us denote the dimensions of $\theta$ and $r$ by $d_\theta$ and $d_r$, respectively. Given a policy $\theta$, define $r_{(i)}(\cdot): \sR^{d_\theta} \rightarrow \sR^{d_r}$ as the function of the reward sequence generated by the agent interacting with the local environment $(i)$ under policy $\theta$. Under Assumption \ref{asp:single}, denote the reward sequence observed by agent $i$ during round $p$ by $r_{(i)}^p$. Then, we have $r_{(i)}^p = r_{(i)}(\theta_{(0)}^{p-1})$.

\noindent
\textbf{Clean local objective function.} With slight abuse of notations, let $J(\theta;r)$ denote the local RL objective function given the policy $\theta$ and the reward sequence $r$, where we have $J:\sR^{d_\theta\times d_r}\to \sR$. According to the mechanism of FRL, we further define the objective of local agent $(i)$ given an initialized policy $\theta$ as $J_{(i)}(\theta):= J(\theta; r_{(i)}(\theta))$. If we denote the objective of local agent $(i)$ at the \textit{start} of round $p$ by $J_{(i)}^p$, we have $ J_{(i)}^p= J(\theta_{(0)}^{p-1}; r_{(i)}(\theta_{(0)}^{p-1})) = J(\theta_{(0)}^{p-1}; r_{(i)}^p)$.

\noindent
\textbf{Poisoned local reward sequence and local objective}. Under Assumption \ref{asp:single}, agent $(n)$ is poisoned. According to our algorithm, we have $\hat{r}_{(n)}^p = {r}_{(n)}^p - \eps \cdot \vec{e}(\nabla_{r_{(n)}^p} J_{(n)}^p)$. Therefore, the poisoned objective is given by $\hat{J}_{(n)}^p = J(\theta_{(0)}^{p-1}; \hat{r}_{(n)}^p)$.

\noindent
\textbf{Local policy update function.} Under Assumption \ref{asp:single}, each agent performs only a one-step local update. Since only agent $(n)$ is poisoned, we have 
\begin{align}\nonumber
\hat{\theta}_{(n)}^p = \theta_{(0)}^{p-1} + \lambda_{\theta} \cdot \nabla_{\theta_{(0)}^{p-1}} \hat{J}_{(n)}^p.  
\end{align}

\begin{assumption}[FedAVG]
\label{asp:fedavg}
    Suppose the server updates global policy by the conventional FedAVG settings, i.e., $\theta_{(0)}^{p} = \frac{1}{n}\sum_{i\in [n]} \theta_{(i)}^{p}$ for clean training. In particular, when agent $(n)$ is poisoned, we have $\hat{\theta}_{(0)}^{p} = \frac{1}{n}(\sum_{i\in [n-1]} \theta_{(i)}^{p} + \hat{\theta}_{(n)}^p )$.
\end{assumption}

\noindent
\textbf{Global objective function.} Under the conventional setting of FL, given model $\theta$, the global objective is defined as $J_{(0)}(\theta) := \frac{1}{n}\sum_{i = 1}^n J_{(i)}(\theta)$. In FRL, at the \textit{end} of round $p$, the clean federated objective is given by  $J_{(0)}^p =J_{(0)}(\theta_{(0)}^p)$. When agent $(n)$ is poisoned, we have $\hat{J}_{(0)}^p =J_{(0)}(\hat{\theta}_{(0)}^p)$.

\begin{assumption}[Objective smoothness]
\label{asp:smooth}
We assume that $J_{(0)}^p$ is differentiable with respect to $r$ and $\theta$ almost everywhere, and $J_{(0)}^p$ is $L_r-$smooth with respect to $r_{(n)}^p$.
\end{assumption}

We are now ready to state our main theoretical result, whose proof is deferred to Appendix A.

\begin{theorem}
\label{thrm:UB}
Let Assumptions \ref{asp:single}, \ref{asp:fedavg}, and \ref{asp:smooth} hold. Suppose that all agents are updated cleanly at the first $p-1$ rounds, and at round $p$, agent $(n)$ is poisoned. Define a scalar $\eps_+ := \frac{2\lambda_\theta B}{n L_r }$, where $B$ is a scalar defined as
\begin{align*}
    B= [\nabla_{\theta'}  J_{(0)}(\theta')]^\top
    \cdot 
    [\nabla_{r,\theta} J(\theta, r)] 
    \cdot \vec{e}(\nabla_{r} J(\theta, r))\Big|_{r= r_{(n)}^p}^{\substack{\theta = \theta_{(0)}^{p-1}\\ \theta'=\theta_{(0)}^p}}.
\end{align*}
Then, for $B>0$ and $\eps < \eps_+$, we have
\begin{align}\nonumber
    \widehat{J}_{(0)}^p < 
    J_{(0)}^p  -  \alpha,
\end{align}
where $ \alpha \in[0, \frac{\eps_+^2}{8}]$.
\end{theorem}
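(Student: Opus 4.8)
The plan is to reduce the statement to a one-dimensional descent-lemma argument in reward space. First I would observe that, under Assumption~\ref{asp:single} ($L=1$, $\gM=\{n\}$) together with Assumption~\ref{asp:fedavg}, the clean and poisoned runs agree through round $p-1$, and every clean agent performs an \emph{identical} one-step update in both runs, since each initializes from the same broadcast $\theta_{(0)}^{p-1}$ and observes unperturbed rewards. Hence the only discrepancy in the aggregate comes from agent $(n)$:
\begin{align*}
\hat{\theta}_{(0)}^p - \theta_{(0)}^p = \tfrac{1}{n}\big(\hat{\theta}_{(n)}^p - \theta_{(n)}^p\big) = \tfrac{\lambda_\theta}{n}\big(\nabla_\theta J(\theta_{(0)}^{p-1};\hat{r}_{(n)}^p) - \nabla_\theta J(\theta_{(0)}^{p-1};r_{(n)}^p)\big).
\end{align*}

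Next I would view the poisoned global objective as a function of agent $(n)$'s reward alone. Define $h(r):=\tfrac{1}{n}\big(\sum_{i<n}\theta_{(i)}^p + \theta_{(0)}^{p-1} + \lambda_\theta \nabla_\theta J(\theta_{(0)}^{p-1};r)\big)$ and $G(r):=J_{(0)}(h(r))$, so that $G(r_{(n)}^p)=J_{(0)}^p$ and $G(\hat{r}_{(n)}^p)=\hat{J}_{(0)}^p$. Since $\hat{r}_{(n)}^p = r_{(n)}^p - \eps\,\vec{e}(\nabla_r J_{(n)}^p)$ is a step of length exactly $\eps$ (because $\vec{e}(\cdot)$ is unit-norm), Assumption~\ref{asp:smooth} ($L_r$-smoothness of $J_{(0)}^p$ in $r_{(n)}^p$, i.e.\ of $G$) and the descent lemma give
\begin{align*}
\hat{J}_{(0)}^p \le J_{(0)}^p - \eps\,\big\langle \nabla_r G(r_{(n)}^p),\, \vec{e}(\nabla_r J_{(n)}^p)\big\rangle + \tfrac{L_r}{2}\eps^2.
\end{align*}

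The crux is to identify the linear coefficient with $B$. By the chain rule, $\nabla_r G(r_{(n)}^p) = \tfrac{\lambda_\theta}{n}\,[\nabla_{r,\theta}J(\theta_{(0)}^{p-1};r_{(n)}^p)]^\top\,\nabla_{\theta'} J_{(0)}(\theta')\big|_{\theta'=\theta_{(0)}^p}$, where the global-objective gradient is evaluated at $\theta_{(0)}^p=h(r_{(n)}^p)$, precisely matching the evaluation points in the definition of $B$. Taking the inner product with $\vec{e}(\nabla_r J_{(n)}^p)$ gives $\langle \nabla_r G(r_{(n)}^p), \vec{e}\rangle = \tfrac{\lambda_\theta}{n}B$, so substituting and using $\eps_+ = \tfrac{2\lambda_\theta B}{nL_r}$ collapses the bound to
\begin{align*}
\hat{J}_{(0)}^p \le J_{(0)}^p - \tfrac{L_r}{2}\,\eps(\eps_+ - \eps).
\end{align*}
Setting $\alpha := \tfrac{L_r}{2}\eps(\eps_+-\eps)$, which is strictly positive whenever $B>0$ and $0<\eps<\eps_+$, yields the strict decrease $\hat{J}_{(0)}^p < J_{(0)}^p$; maximizing this quadratic in $\eps$ (vertex at $\eps=\eps_+/2$) reproduces the claimed range for the guaranteed gap $\alpha$.

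I expect the main obstacle to be the bookkeeping in the chain-rule identification of $B$: one must be careful that $\nabla_\theta J$ is differentiated at the pre-aggregation point $\theta_{(0)}^{p-1}$ while $\nabla J_{(0)}$ is taken at the post-aggregation point $\theta_{(0)}^p$, and that the mixed partial $\nabla_{r,\theta}J$ enters with the correct orientation and the $\lambda_\theta/n$ prefactor. The deeper modeling point is that Assumption~\ref{asp:smooth} carries the weight of the argument: by positing smoothness of the global objective \emph{directly} in reward space, it lets the entire composite map (policy-gradient step, then FedAVG, then global objective) be controlled by a single descent lemma, sidestepping a two-stage second-order expansion that would otherwise demand separate smoothness of $\nabla_\theta J$ in $r$ and of $J_{(0)}$ in $\theta$. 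I would also flag that the descent lemma delivers ``$\le$'', so the strictness in the conclusion should be read as arising from $\alpha>0$.
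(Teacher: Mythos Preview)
Your proposal is correct and follows essentially the same route as the paper: apply the descent lemma from the $L_r$-smoothness of $J_{(0)}^p$ in $r_{(n)}^p$, use the chain rule through the one-step FedAVG update to identify the linear coefficient as $\tfrac{\lambda_\theta}{n}B$, and then analyze the resulting quadratic in $\eps$ to obtain strict decrease on $(0,\eps_+)$ with the vertex at $\eps_+/2$. Your explicit packaging via $G(r)=J_{(0)}(h(r))$ is a slightly cleaner bookkeeping device, but the argument, the evaluation points for $\nabla_{\theta'}J_{(0)}$ and $\nabla_{r,\theta}J$, and the final quadratic $\tfrac{L_r}{2}\eps(\eps_+-\eps)$ all match the paper's proof.
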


According to our poisoning framework, with a small poison budget $\epsilon$, we can guarantee that the poisoned global objective $\hat{J}_{(0)}^p$ is strictly smaller than the clean global objective ${J}_{(0)}^p$. The upper bound of the objective gap $\alpha$ is determined by the upper bound of $\epsilon$. Therefore, a higher attack budget $\eps$ can indicate a greater decrease in the global objective. Similarly, a larger local learning rate $\lambda_\theta$ or fewer number of agents can increase $\eps_+$, indicating a stronger objective decrease $\alpha$. In Appendix A, we shall discuss a practical scenario under which the condition $B>0$ always holds.

\section{Numerical Experiments}
\label{sec:Eval}

In this section, we conduct a series of experiments to evaluate the effectiveness of our poisoning method on the FRL system. Our results show that the proposed poisoning method can effectively reduce the mean episode reward of the server in the FRL system. Additionally, our poisoning protocol does not require knowledge of the MDP of the environment and is consistent with the multiple local steps setting of FRL.

\begin{figure*}[htp]
    \centering
    \includegraphics[height=1.29in]{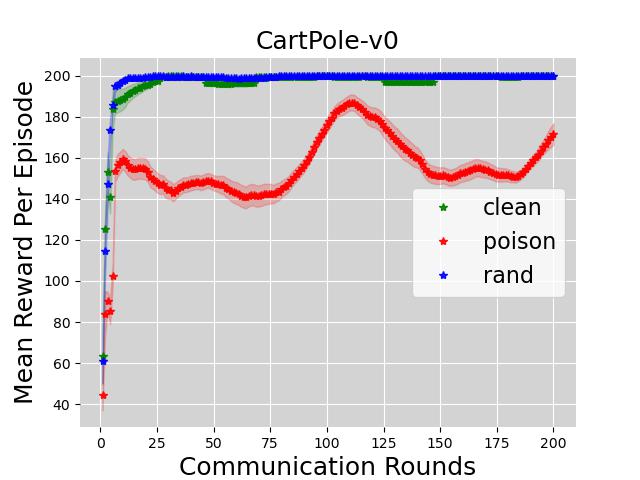}
    \includegraphics[height=1.29in]{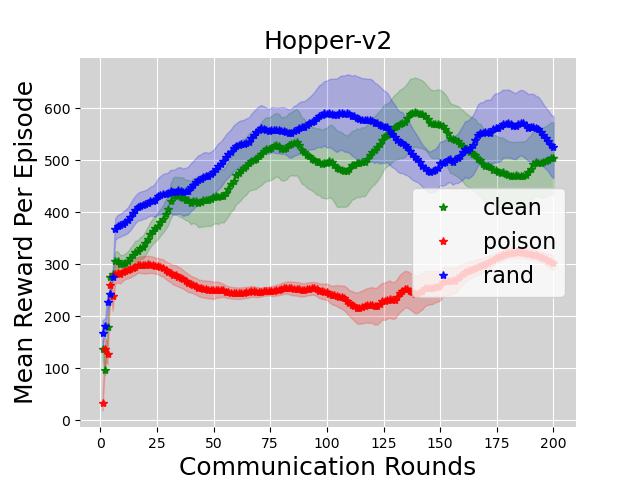}
    \includegraphics[height=1.29in]{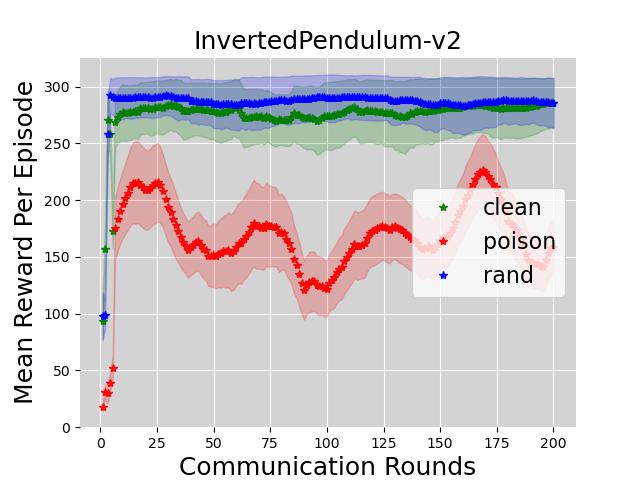}
    \includegraphics[height=1.29in]{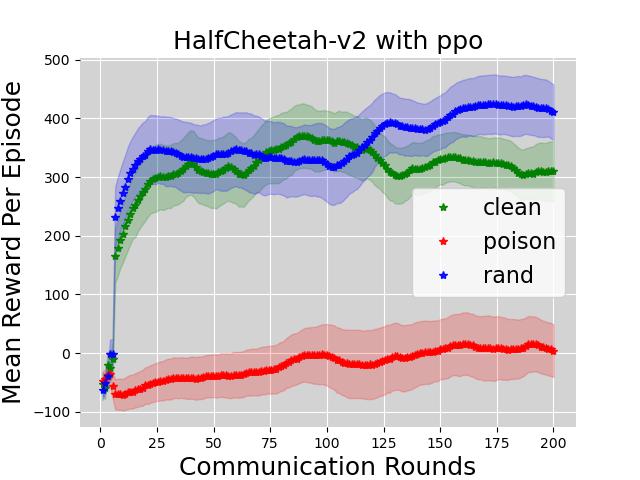}
    \caption{Rewards given by a poisoned PPO system (attack budget $\eps = 1$) with a single attacker under the proposed method are significantly lower than the clean system of the same agent size and random attacks. The system size is three agents for InvertedPendulum and four agents for the others.}
    \label{fig:ppo-r1}
\end{figure*}

\subsection{Experimental Settings}

\noindent
\textbf{Local RL settings.} We train our individual-agent model with two reinforcement learning algorithms: Vanilla Policy Gradient (VPG)~\cite{sutton1999policy} and Proximal Policy Optimization (PPO)~\cite{schulman2017proximal}.
We evaluate the system on various OpenAI Gym environments with increasing complexity, including CartPole, Inverted Pendulum, Hopper, Lunar Lander, and Half Cheetah. For both VPG and PPO settings, we let the malicious agent attacks the reward with a budget of $\eps = 1$; all agents run $50$ local steps in each communication round. 

\smallskip
\noindent
\textbf{Federated settings.} We follow a conventional federated framework. In particular, the central server aggregates the models submitted by local agents by taking the average at the end of each communication round (Eq.(\ref{eq:A-agg-spe})) and then broadcasts the new global model, which is used by the local agents as initialization (Eq.(\ref{eq:A-up-spe})) for the next round of local training. There are $200$ total communication rounds.\footnote{We refer to Appendix A for additional numerical results related to other settings of the FRL systems.}

\smallskip
\noindent 
\textbf{Baselines.} To the best of our knowledge, no existing FRL-poisoning method matches our setting of multiple local steps without access to the MDP environment. Therefore, to evaluate our method, we compare it to a baseline of clean training, where all agents follow the expected rules of local updates and federated communications, i.e., $|\mathcal{M}| = 0$. We also test a reward-based randomized attack, in which (\ref{eq:poison_r}) is replaced by 
\begin{align}
    \{\widehat{r}_t\}_{(j)}^{p,q}\leftarrow \{r_t\}_{(j)}^{p,q} -  \eps \cdot x
    \text{ for } x \sim U(0, 1),
    \label{eq:poison_rand}
\end{align}  
where $U(a,b)$ denotes the uniform distribution from $a$ to $b$. 

\smallskip
\noindent
\textbf{Evaluation matrix.} For \textit{untargeted poisoning}, we evaluate the performance of these methods by measuring the mean-episode reward of the central model, which is calculated based on 100 test episodes at the end of each federated round. For \textit{targeted poisoning}, we measure the similarity between learned policy and targeted policy: for discrete action space, we calculate the proportion of target actions among all actions; For continuous action space, we collect $1- scaled\ distance$. Under both measurements, a higher value indicates a closer learned policy to the target policy. For all experiments, we average the results over multiple random seeds.

\subsection{Performance Evaluation}
Here, we present experimental results that demonstrate the effectiveness of our proposed poisoning method compared to a clean setup, compare it to other poisoning methods, and assess the impact of the system size on the success of the attack. We show that with a small proportion of malicious agents, our method is able to poison most FRL systems successfully. Additionally, we observe its consistent outperformance compared to other poisoning methods across algorithms and environments. Lastly, we demonstrate that the proportion of malicious agents remains a critical factor in the success of the attack, regardless of the overall size of the system.

For the VPG algorithm (Fig. (\ref{fig:vpg-r1})), we see that across several environments, the attacker successfully poisons a system of 3-4 agents using our method. The effectiveness of this poisoning depends on the environment. For instance, we can observe a very clear difference between clean and poisoned runs for CartPole, but not as much for InvertedPendulum. Nevertheless, there is less consistency in the rewards for InvertedPendulum, and as the rounds progress, the clean run would continue to accumulate greater mean rewards. We also observe that our method performs at least as well and, in most cases, much better than a random attack in poisoning the system. This proves the capability of our VPG attack method in poisoning federated systems.

Attacking a PPO system (Fig. (\ref{fig:ppo-r1})) lends itself to a much different dynamic because the separate value parameters allow all agents to learn more effectively, and thus we expect both clean and malicious agents to be more successful in their opposite goals. Ultimately, we observe that malicious agents using our method are much more successful in their attacks across environments compared to the VPG results. We attribute this to the more sophisticated attack model, which reduces noise in the gradient updates and enables the reward poisoning to be more effective. In contrast, we see that a random attack performs very poorly, as the strength of the clean agents overwhelms the weak attack and results in high rewards over time. 

Finally, we conduct extensive experiments, investigating other attack aspects on the performance of the FRL, such as larger attack budget, attacker proportion generalization, targeted attack, and single-critic attack. In particular, we provide a defense mechanism to robustify the FRL system under certain attack environments. Due to space limitations, we defer all those results to the supplementary document. 

\section{Conclusions}
In this work, we propose a novel method for poisoning FRL under both general policy-based and actor-critic algorithms, which can provably decrease the system's global objective. Our method is evaluated through extensive experiments on various OpenGYM environments using popular RL models, and the results demonstrate that our method is effective in poisoning FRL systems. Our work highlights the potential risks of FRL and inspires future research for designing more robust algorithms to protect FRL against poisoning attacks.

\bibliography{ref}
\newpage

.
\newpage 
\section*{Appendix A: Proof of Theorem \ref{thrm:UB}}
Lemma: quadratic inequality
\medskip
\begin{theorem}[Theorem \ref{thrm:UB}-restated]
\label{thrm:UB-proof}
Let Assumptions \ref{asp:single}, \ref{asp:fedavg}, and \ref{asp:smooth} hold. Suppose that all agents are updated cleanly at the first $p-1$ rounds, and at round $p$, agent $(n)$ is poisoned. Define a scalar $\eps_+ := \frac{2\lambda_\theta B}{n L_r }$, where $B$ is a scalar defined as
\begin{align*}
    B= [\nabla_{\theta'}  J_{(0)}(\theta')]^\top
    \cdot 
    [\nabla_{r,\theta} J(\theta, r)] 
    \cdot \vec{e}(\nabla_{r} J(\theta, r))\Big|_{r= r_{(n)}^p}^{\substack{\theta = \theta_{(0)}^{p-1}\\ \theta'=\theta_{(0)}^p}}.
\end{align*}
Then, for $B>0$ and $\eps < \eps_+$, we have

\begin{align}
    \widehat{J}_{(0)}^p < 
    J_{(0)}^p  -  \alpha, \label{eq:UB-ntg}
\end{align}
where $ \alpha \in[0, \frac{\eps_+^2}{8}]$.
\end{theorem}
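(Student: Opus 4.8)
The plan is to trace how the single poisoned reward of agent $(n)$ propagates, through one gradient step and the $\mathrm{FedAVG}$ average, into the global objective, and then to control the resulting change with the smoothness assumption. Under Assumptions~\ref{asp:single} and~\ref{asp:fedavg}, the clean and poisoned global models differ only through agent $(n)$: since the first $n-1$ agents are identical in both runs, $\widehat{\theta}_{(0)}^p - \theta_{(0)}^p = \frac{1}{n}\big(\widehat{\theta}_{(n)}^p - \theta_{(n)}^p\big)$, and each of these two local models is a single ascent step on $J$ differing only in the reward fed to it. This motivates viewing agent $(n)$'s reward as the sole free variable: define $F(r) := J_{(0)}\big(\widehat{\theta}_{(0)}^p(r)\big)$, where the dependence flows $r \mapsto \widehat{\theta}_{(n)}^p(r) \mapsto \widehat{\theta}_{(0)}^p(r) \mapsto J_{(0)}$, holding the clean round $p-1$ state fixed. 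Then $J_{(0)}^p = F(r_{(n)}^p)$ and $\widehat{J}_{(0)}^p = F(r_{(n)}^p - \eps\,u)$ with $u = \vec{e}(\nabla_r J_{(n)}^p)$.

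First I would invoke Assumption~\ref{asp:smooth}, which states precisely that $F$ is $L_r$-smooth in $r$, so the standard descent inequality gives
\begin{align*}
    \widehat{J}_{(0)}^p = F(r_{(n)}^p - \eps\, u) \le F(r_{(n)}^p) - \eps\,\nabla_r F(r_{(n)}^p)^\top u + \tfrac{L_r}{2}\eps^2 .
\end{align*}
Next I would evaluate the first-order term by the chain rule along the composition above. The Jacobian of the one-step update $\widehat{\theta}_{(n)}^p = \theta_{(0)}^{p-1} + \lambda_\theta \nabla_\theta J(\theta_{(0)}^{p-1}; r)$ with respect to $r$ is $\lambda_\theta\,\nabla_{r,\theta}J$; the average contributes a factor $\tfrac1n$; and the outer gradient is $\nabla_{\theta'}J_{(0)}$ at $\theta' = \theta_{(0)}^p$, the clean model, since we linearize at $\eps=0$. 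Contracting with $u = \vec{e}(\nabla_r J)$ reproduces exactly the scalar $B$, so that $\nabla_r F(r_{(n)}^p)^\top u = \tfrac{\lambda_\theta}{n} B$, with all derivatives evaluated at $\theta = \theta_{(0)}^{p-1}$, $\theta' = \theta_{(0)}^p$, $r = r_{(n)}^p$.

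Substituting and using $\eps_+ = \tfrac{2\lambda_\theta B}{n L_r}$ yields $\widehat{J}_{(0)}^p \le J_{(0)}^p - \alpha(\eps)$ with $\alpha(\eps) = \tfrac{\lambda_\theta B}{n}\eps - \tfrac{L_r}{2}\eps^2 = \tfrac{L_r}{2}\,\eps(\eps_+ - \eps)$. The ``quadratic inequality'' lemma then finishes it: for $B>0$ and $0 < \eps < \eps_+$ this quadratic is strictly positive, giving the strict decrease $\widehat{J}_{(0)}^p < J_{(0)}^p - \alpha$ with $\alpha = \alpha(\eps)$, and it attains its maximum $\tfrac{L_r \eps_+^2}{8}$ at $\eps = \eps_+/2$, so $\alpha \in [0, \tfrac{L_r\eps_+^2}{8}]$ (matching the stated range up to the displayed $L_r$ factor).

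The main obstacle I anticipate is the first-order computation and its bookkeeping: one must verify that the chain rule through $r \mapsto \widehat{\theta}_{(n)}^p \mapsto \widehat{\theta}_{(0)}^p \mapsto J_{(0)}$ collapses to exactly $B$ with the correct evaluation points (the inner derivatives at $\theta_{(0)}^{p-1}$ and the outer gradient at $\theta_{(0)}^p$), and that the poisoning direction $-u$ renders this first-order term $-\tfrac{\lambda_\theta}{n}B$ negative when $B>0$. A secondary subtlety is confirming that Assumption~\ref{asp:smooth} supplies smoothness of the \emph{composite} map $F$ rather than of $J$ alone, so that the full second-order effect—including the $\eps$-dependence of the update Jacobian—is absorbed into the single $\tfrac{L_r}{2}\eps^2$ term and no separate Hessian expansion of the policy step is required.
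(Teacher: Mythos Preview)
Your proposal is correct and follows essentially the same route as the paper: apply the $L_r$-smoothness of the composite map $r_{(n)}^p \mapsto J_{(0)}^p$ to get the descent inequality, unwind the first-order term via the chain rule through the one-step update and the FedAVG average to obtain $\tfrac{\lambda_\theta}{n}B$, and then analyze the resulting quadratic in $\eps$. Your remark about the extra $L_r$ factor in the maximizer is well taken---the paper's own computation of the optimal bound reads $\tfrac{\lambda_\theta^2 B^2}{2 L_r^2 n^2}$, which equals $\tfrac{\eps_+^2}{8}$ as stated there, whereas the quadratic $\tfrac{L_r}{2}\eps(\eps_+-\eps)$ actually peaks at $\tfrac{L_r\eps_+^2}{8}$; this appears to be a typo in the paper rather than a flaw in your argument.
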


\begin{proof}
Since $J_{(0)}^p$ is $L_r$-smooth with respect to $r_{(n)}^p$, we have
Moreover, we can write
and
Substituting Eq. (\ref{eq:and}) into Eq. (\ref{eq:since}), we obtain
On the other hand, we have
If we combine Eq. (\ref{eq:since2}) with Eq. (\ref{eq:get}), we get
Substituting Eq. (\ref{eq:combine}) into Eq. (\ref{eq:lip}) and using the fact that $\frac{L_r}{2} \|\widehat{r}_{(n)}^p - r_{(n)}^p\|^2 
= \frac{ \epsilon^2\cdot L_r}{2}$, we get
As the right-hand side of Eq. (\ref{eq:lip'}) is a quadratic function with respect to $\epsilon$, we get that when $B > 0$ and $0< \epsilon < \frac{2 \lambda_\theta B}{K L_r}$, it holds that $\hat{J}_{(0)}^p < J_{(0)}^p$, indicating a strict decrease in the objective of FRL being poisoned compared with clean training. In particular, the smallest bound in Eq. (\ref{eq:lip'}) is acheived when $\epsilon = \frac{\lambda_\theta B}{n L_r}$, implying $\widehat{J}_{(0)}^p \leq 
    J_{(0)}^p  -  \frac{\lambda_\theta^2 B^2}{2L_r^2 K^2} = J_{(0)}^p  -  \frac{\eps_+^2}{8}$.\end{proof}

\begin{figure*}[t]
    \centering
    \includegraphics[height=1.29in]{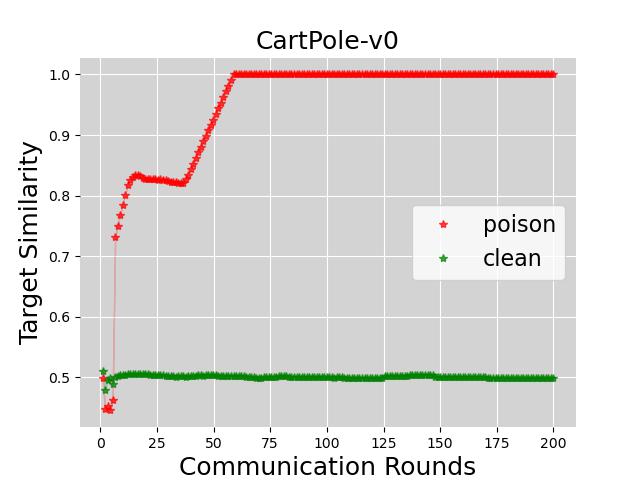}
    \includegraphics[height=1.29in]{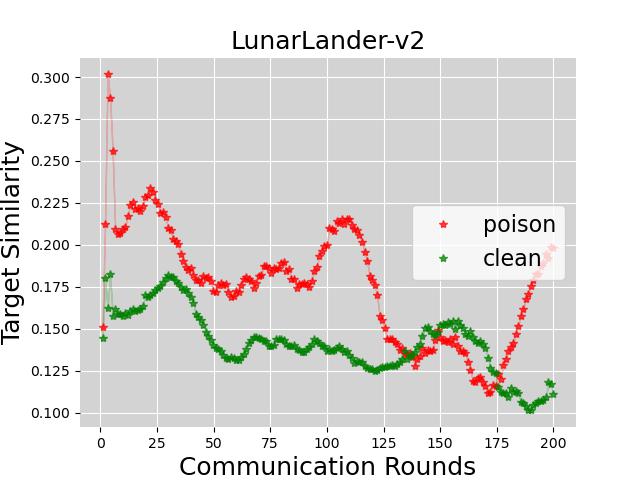}
    \includegraphics[height=1.29in]{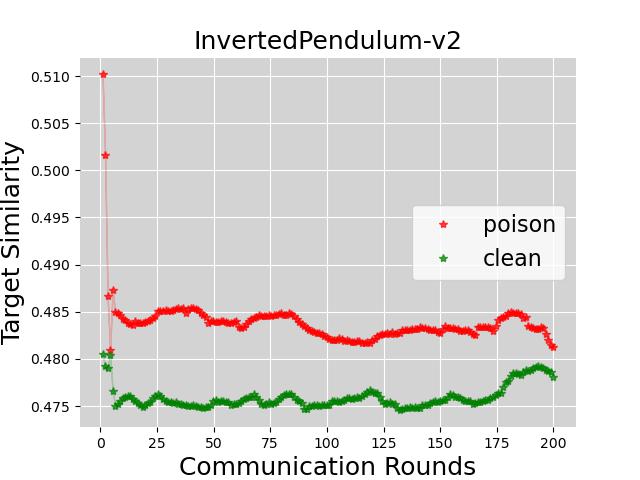}
    \includegraphics[height=1.29in]{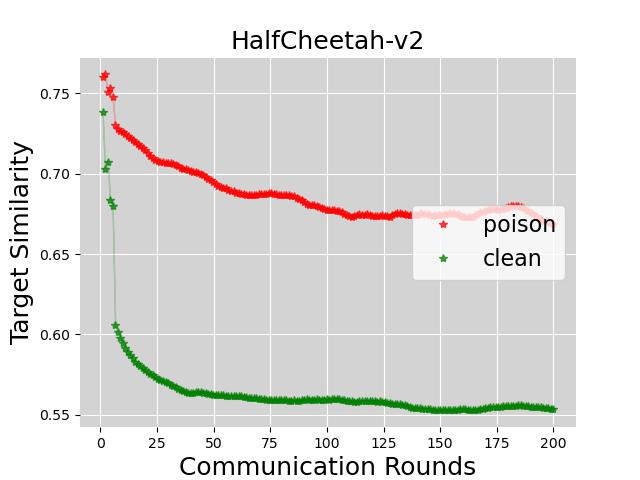}
    \caption{Target attack against VPG FRL. We choose environments of different action space: CartPole (two discrete actions), LunarLander (four discrete actions), InvertedPendulum (one continuous dimension), HalfCheetah (six continuous dimensions).}
    \label{fig:target}
\end{figure*}

\noindent
\textbf{A case where $B>0$.}
Suppose $J(\theta; r)= {\gamma} \cdot r$, where $r$ is the reward sequence and $\gamma$ is the discount factor vector. This objective corresponds with the typical accumulated discounted reward setting. Suppose that $r_{(n)}(\theta)$ is a derivable function. In this case, it is easy to see that
Moreover, direct calculations reveal that
Therefore, 
Suppose that agents interact with the same environment, that is $r_{(i)}(\theta) = r(\theta), \forall i\in[n]$. Then, if $r'''(\theta) = 0$, we get
and also,
Therefore, in this case we always have $B>0$, as long as $r''(\theta_{(0)}^{p-1}) \in (-\infty, -(\lambda \gamma)^{-1})\bigcup(0,+\infty)$.

\section*{Appendix B: Proximal Policy Optimization (PPO)-Specific Framework}
\label{sec:PF2}
In this appendix, we focus on a specific local RL algorithm, Proximal Policy Optimization (PPO)~\cite{schulman2017proximal}, for the individual agents in FRL and propose a corresponding framework for poisoning. By specifying the local RL algorithm as PPO, we are able to tailor the problem formulation in Problem (\ref{opt-P}) and accordingly propose a targeted solution in Section \ref{sec:method} to poisoning PPO-specific FRL. In Section \ref{subsec:PPOpre}, we introduce PPO preliminaries to specify the general variable in Problem (\ref{opt-P}). Then we discuss the PPO-specific problem formulation for poisoning FRL in Section (\ref{subsec:PPOpoison}). This will allow us to take into account the specific characteristics of the PPO algorithm when defining the problem.

\subsection*{PPO Preliminaries} 
\label{subsec:PPOpre}
PPO is a popular Actor-Critic algorithm that uses a clipped surrogate objective. 
For agent $i$ at federated round $p$ and local episode $q$, denote the pair of state and action at its $t$-th step of rollout as $O_{(i)}^{p,q,t} = (s_{(i)}^{p,q,t}, a_{(i)}^{p,q,t})$ and denote the $V$-function and $Q$-function defined by Bellman Equation~\cite{baird1995residual} as $V(\cdot)$ and $Q(\cdot,\cdot)$, respectively. Then the advantage function is $A_{(i)}^{p,q,t}:= Q(s_{(i)}^{p,q,t},a_{(i)}^{p,q,t}) - V(s_{(i)}^{p,q,t})$.

\smallskip
\noindent 
\textbf{PPO's actor.} Denote the actor model as $\pi_{\theta}(\cdot|s,\theta)$, where $\theta$ is the model weight and $s$ is some given state.
To specify the general problem (\ref{opt-P}) to the PPO case, the clean agent's objective $J(\cdot)$ (Eq.(\ref{eq:gen-i-local})) should be the PPO surrogate objective
where$
    {\gamma}_{(i)}^{p,q,t}:= {\pi({a}_{(i)}^{p,q,t}|{s}_{(i)}^{p,q,t}, \theta_{(i)}^{p,q})}
    \big/{\pi({a}_{(i)}^{p,q,t}|{s}_{(i)}^{p,q,t}, \theta_{(i)}^{p,q-1})} $, and $\overline{A}_{(i)}^{p,q,t}$ is estimated based on both PPO's critic and the observation that the actor samples. Here,
    $c_{(i)}^{p,q,t} := \text{clip}(\gamma_{(i)}^{p,q,t}, 1-\eta, 1+\eta)
$, where $\mbox{clip}(\cdot)$ is a clipping function parameterized by $\eta$.

\smallskip
\noindent
\textbf{PPO's critic.} Let us use $\ \overline{\cdot}\ $ to denote estimation. Denote PPO's critic model as $\phi(\cdot\big|\omega_{(i)}^{p,q})$, where $\omega$ is the model's weights. As with all typical actor-critic algorithms, the critic is a Value neural network to help estimate the V-value of the actor so as to further calculate the actor's objective. In PPO, the actor's objective is a clipped advantage (Eq. \ref{eq:ppo-obj-act}), where the advantage is estimated by the critic and observation, which can be written in the form of $\overline{A}_{(i)}^{p,q,t} = A(\omega_{(i)}^{p,q}, O_{(i)}^{p,q,t})$. The critic model updates itself by minimizing the temporal-difference error~\citep{tesauro1995temporal} between the estimated and observed $V$-value. We denote the critic's objective by $\delta_{(i)}^{p,q}$.

\begin{figure}[H]
    \centering
    \includegraphics[height=1.23in]{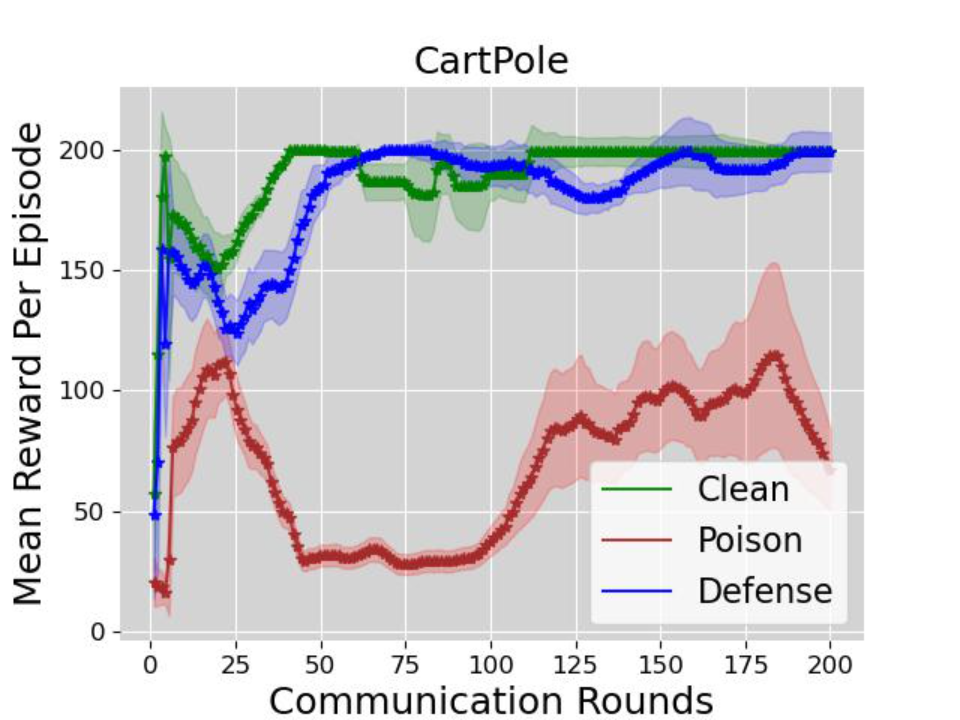}
    \includegraphics[height=1.23in]{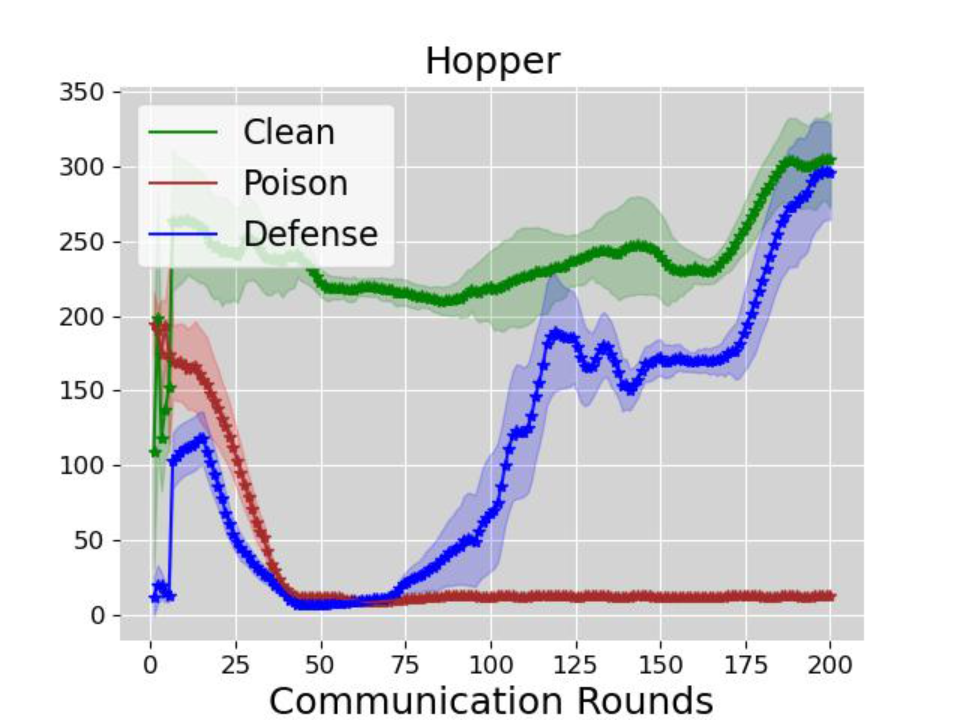}
    \includegraphics[height=1.23in]{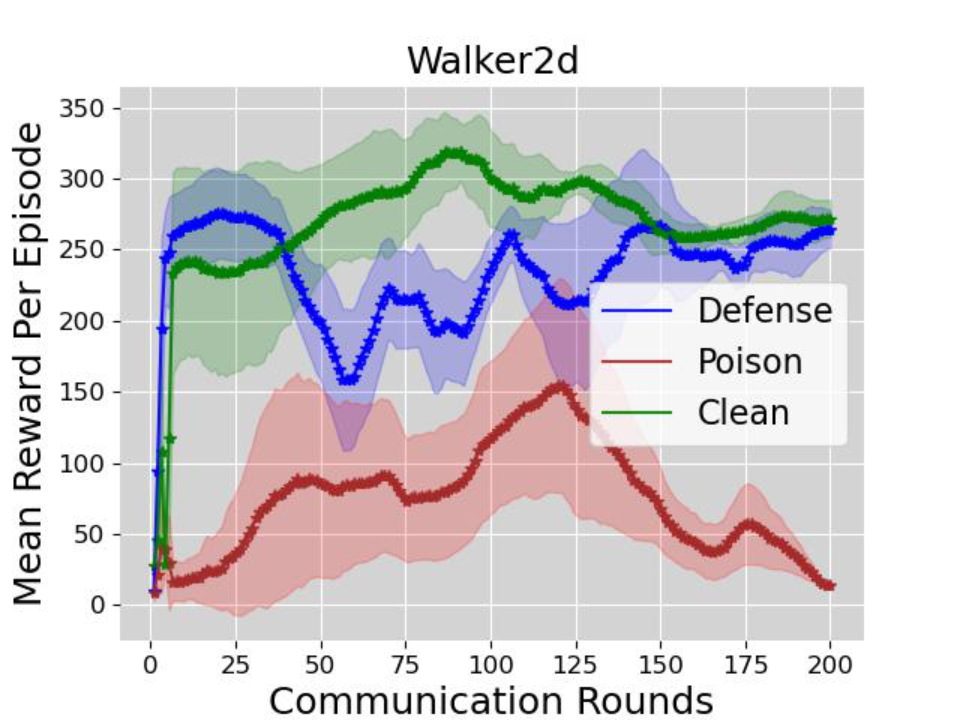}
    \includegraphics[height=1.23in]{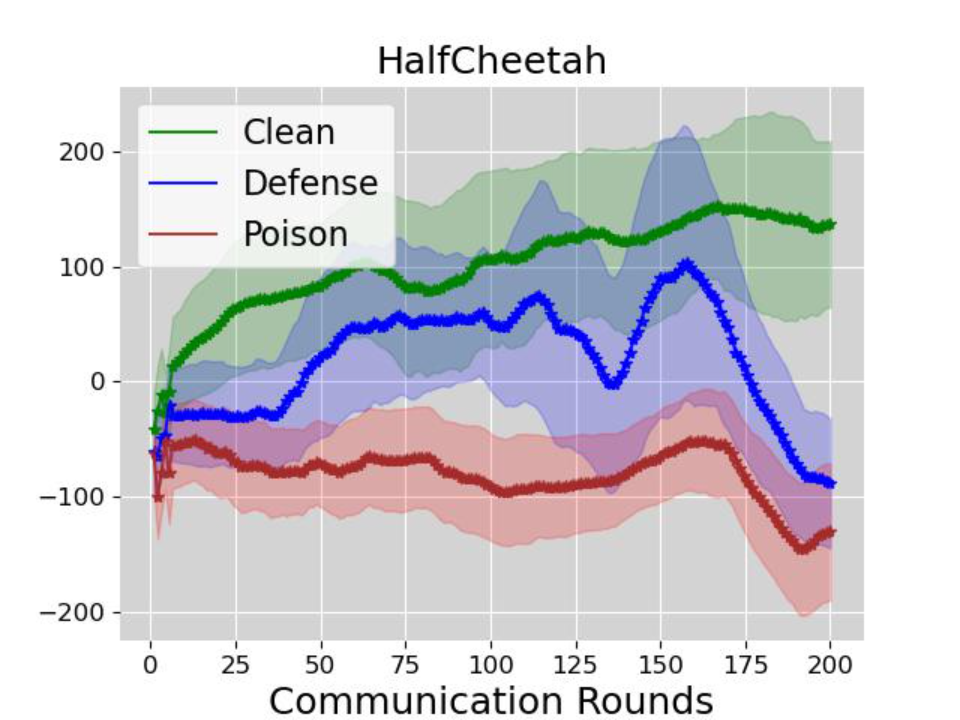}
    \caption{We train a two-agent FRL system and report results of the clean baseline, the poisoned model (budget $\eps$ = 1), and the poisoned model with a defense mechanism.}
    \label{fig:vpg-df}
\end{figure}

\subsection*{PPO-specific FRL poisoning.} 
\label{subsec:PPOpoison}
As an actor-critic algorithm, when we fit single-agent PPO into a federated framework, we assume that besides the actor model, the critic model should also be updated from individual agents to the server, then aggregated by the server and finally broadcast to the local agents at each federated round $p$. Denote the aggregated actor as $\theta_{(0)}^{p}$ and the aggregated critic as $\omega_{(0)}^{p}$. To specify the agents' local initialize function $\mathcal{A}^{up}(\cdot)$ (Eq.(\ref{eq:gen-i-init})) and the server's aggregation function $\mathcal{A}^{agg}$ (Eq.(\ref{eq:gen-agg})), we take a conventional paradigm in FL:
where $ \mathcal{A}^{up}$ assigns the server's broadcast model to local agents' initial model in the next round of local training~\cite{bhagoji2019analyzing, zhang2019poisoning, bagdasaryan2020backdoor}; $\mathcal{A}^{agg}$ aggregates the local models by adding the averaged local model update to the server's model~\cite{bhagoji2019analyzing, bagdasaryan2020backdoor}. By substituting $\theta_{(i)}^{p,0} = \theta_{(0)}^{p-1}$, $A^{agg}$ is equivalent to assigning the averaged local model as the server's model~\cite{bhagoji2019analyzing}.

We set the poison cost as $D(\mathcal{O}^{p,q}, \widehat{\mathcal{O}}^{p,q})\! =\! \|\mathcal{O}^{p,q} - \widehat{\mathcal{O}}^{p,q}\|_2$, and thereby propose the PPO-specific Problem as \ref{opt-PPO}:
where $\ \widehat{\cdot}\ $ denotes the poisoned variables induced by $\widehat{\mathcal{O}}^{p,q}$. The constraints interpretation is similar to that in Section \ref{subsec:PF1}, 
except that all the equations related to $\omega$ characterize the initialization and local training for the critic, while those related to $\theta$ are for the actor. The constraints are summarized in Table (\ref{tab:eq-ppo}).

\begin{table}[H]
\caption{Constraints of Problem (\ref{opt-PPO}).}
\label{tab:eq-ppo}
\vskip 0.15in
\begin{center}
\begin{small}
\begin{sc}
\begin{tabular}{llll}
\toprule
 Party& Model & initialization&  train\\
\midrule
clean agents & actor & Eq.(\ref{eq:ppo-ac-init}) &  Eq.(\ref{eq:ppo-ac-train})\\
 &critic & Eq.(\ref{eq:ppo-cr-init}) & Eq.(\ref{eq:ppo-cr-train})\\
\midrule
attacker &actor & Eq.(\ref{eq:ppo-ac-init}) & Eq.(\ref{eq:ppo-ac-poison})\\
&critic & Eq.(\ref{eq:ppo-cr-init}) & Eq.(\ref{eq:ppo-cr-poison})\\
\bottomrule
\end{tabular}
\end{sc}
\end{small}
\end{center}
\vskip -0.1in
\end{table}

\section*{Appendix C: A Defense Mechanism}
\label{sec:defense}

\medskip
To mitigate the risk of FRL being exposed to malicious agents, below we describe a defense mechanism against FRL (Algorithm (\ref{alg:defense})) which inherits from conventional FL defense. The core idea is to assign a credit score to each agent's policy based on its performance during testing. To that end, the central server runs tests on each policy it receives from the agents and records the observations denoted by ${O_{(i)}^{p,test}}$. The server then calculates the average reward $\overline{r}_{i}^{p,test}$ for each policy by averaging the rewards in the sequence $\{r_t\}_{(i)}^{p,test}$. Finally, the server normalizes the average rewards by dividing them by the sum of all averaged rewards, resulting in a set of normalized weights ${c}_{i}^{p,q}$. These weights are used to weight the average aggregation of the policies:
The defense mechanism is outlined in Algorithm (\ref{alg:defense}). This protocol can be integrated into Algorithm (\ref{alg:poison}) and (\ref{alg:poison_2}) as the aggregation algorithm $\mathcal{A}^{agg}$.

\vspace*{-.2cm}
\begin{algorithm}[ht]
   \caption{FRL Defense Aggregation}
   \label{alg:defense}
\begin{algorithmic}[1]
    \STATE {\bfseries Input:} 
    Submitted local actors by the agents $\{\widehat{\theta}_{(i)}^{p,L}\}_{i\in\mathcal{M}}$ and  
    $\{\theta_{(i)}^{p,L}\}_{i\notin \mathcal{M}}$; Submitted local critics
    $\{\widehat{\omega}_{(i)}^{p,L}\}_{i\in\mathcal{M}}$ and
    $\{\omega_{(i)}^{p,L}\}_{i\notin \mathcal{M}}$.
    \STATE {\bfseries Output:} Aggregated actor and critic $\theta_{(0)}^p$, $\omega_{(0)}^p$.
   \FOR{$i=1$ {\bfseries to} $n$}
   \IF{$i\in \mathcal{M}$}
   \STATE Obtain $O_{(i)}^{p, test}$ by $\widehat{\theta}_{(i)}^{p,L}$
   \ELSE
   \STATE Obtain $O_{(i)}^{p,test}$ by $\theta_{(i)}^{p,L}$
   \ENDIF
   \STATE Get mean reward $\overline{r}_{(i)}^{p,test}$ from $O_{(i)}^{p,test}$.
   \ENDFOR
   \FOR{$i=1$ {\bfseries to} $n$}
   \STATE Normalize the credit ${c}_{i}^{p,q} \leftarrow \frac{\overline{r}_{i}^{p,q}}{\sum_i \overline{r}_{i}^{p,q}} $
   \ENDFOR
   \STATE Obtain $\theta_{(0)}^{p}$ and $ \omega_{(0)}^{p} $ by Eq. (\ref{eq:dfs-agg1}) and (\ref{eq:dfs-agg2}).
\end{algorithmic}
\end{algorithm}

\section*{Appendix D: Additional Experiments}
\textbf{Additional general settings for the FRL system}. We measure the attack cost by the $\ell_2$ distance between the poisoned reward and the ground-truth observed reward. During each local training step, the maximum number of steps before termination is 300 unless restricted by the environment. The learning rate is set to 0.001, and the discount parameter is set to $\gamma=0.99$.

\smallskip
\noindent
\textbf{Target policy settings in target-poisoning experiments}. The targets are $0 \in \{0, 1\}$ for CartPole, $0 \in \{0, 1, 2, 3\}$ for LunarLander, $3 \in [-3, 3]$ for InvertedPendulum, and $0 \in [-1, 1]^6$ for HalfCheetah. 

\smallskip
\noindent
\textbf{Defense Performance.} In Fig. (\ref{fig:vpg-df}), we evaluate the defense method proposed above against our poisoning attacks on various environments using the VPG algorithm and a low budget. For each model uploaded to the server, we ran it for 10 episodes and collected the mean reward per episode. We then used the normalized rewards as the weights of aggregation. The results show that while the defense method can improve the rewards of a poisoned model, there is still a significant gap compared to the clean baseline during the training process. Figure (\ref{fig:vpg-df}) suggests that the defended FRL system is often protected from poisoning attacks but cannot reach as good as clean training. Furthermore, it is still vulnerable in complex environments such as Half Cheetah. This highlights the need for designing more robust FRL mechanisms in the future.

\begin{figure}[H]
    \centering
    \includegraphics[height=1.29in]{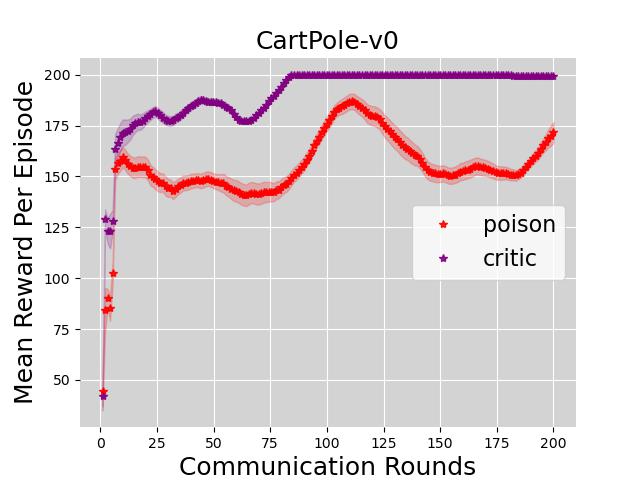}
    \caption{The two-critic PPO attack (labeled \textit{poison} in the plot) is superior compared to single-critic (labeled \textit{critic} in the plot).}
    \label{fig:critic}
\end{figure}

\noindent
\textbf{Large budget}. We evaluated the performance of our poisoning with a large budget ($\eps = 100$) in Figure (\ref{fig:vpg-r100}). The results show that with a larger budget, our method is able to attack much larger systems compared with the constraints that appeared in the case of a small budget, i.e., we can attack up to 100 agents.

\begin{figure}[t]
    \centering
    \includegraphics[height=1.2in]{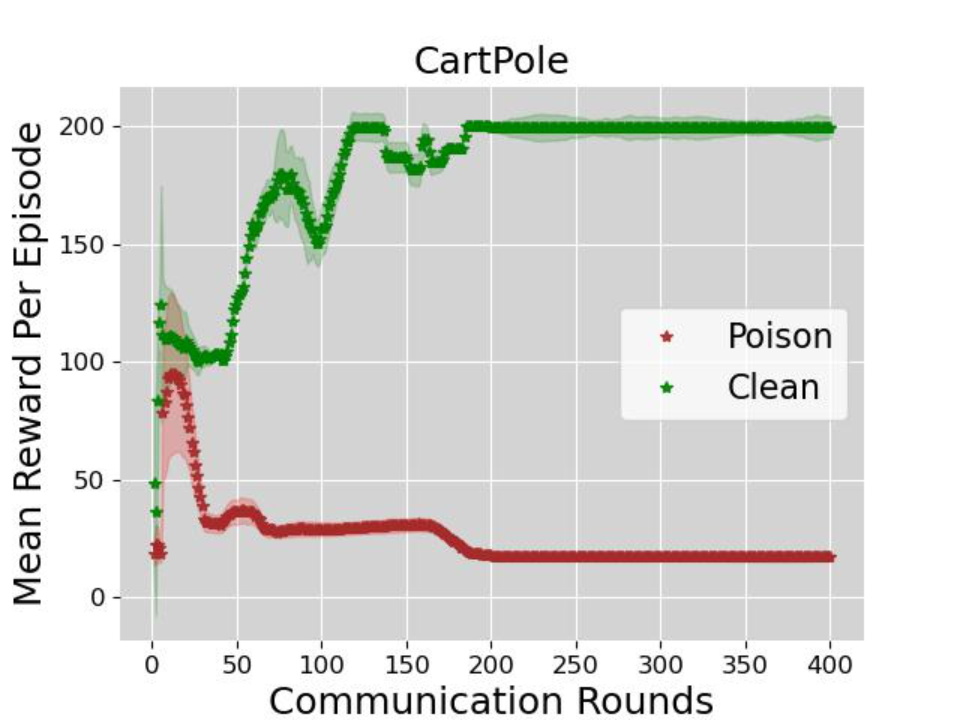}
    \includegraphics[height=1.2in]{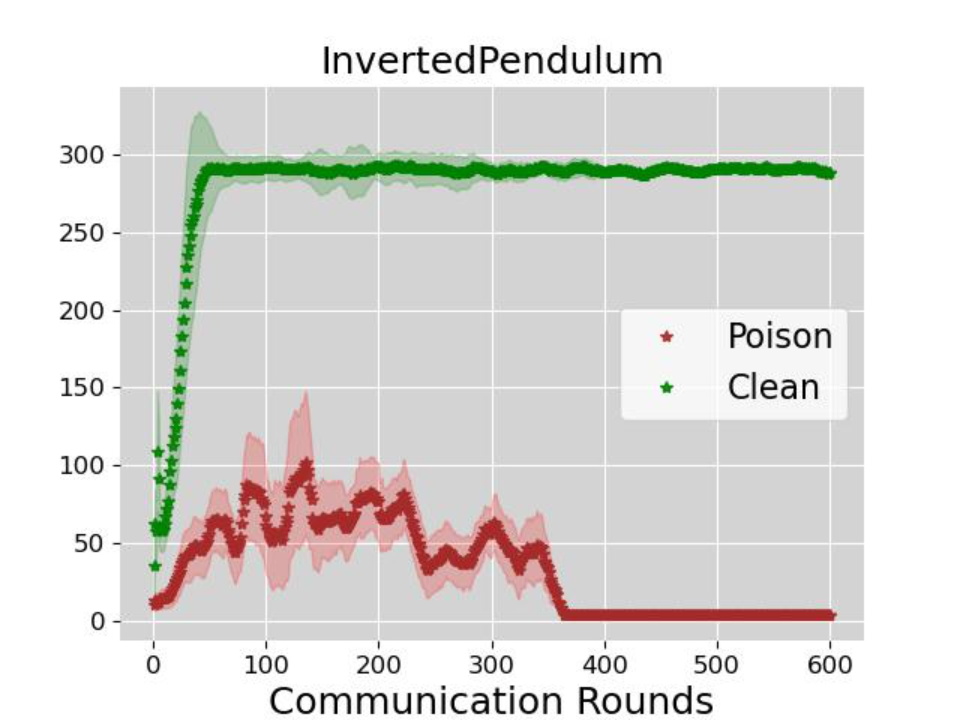}
    \includegraphics[height=1.2in]{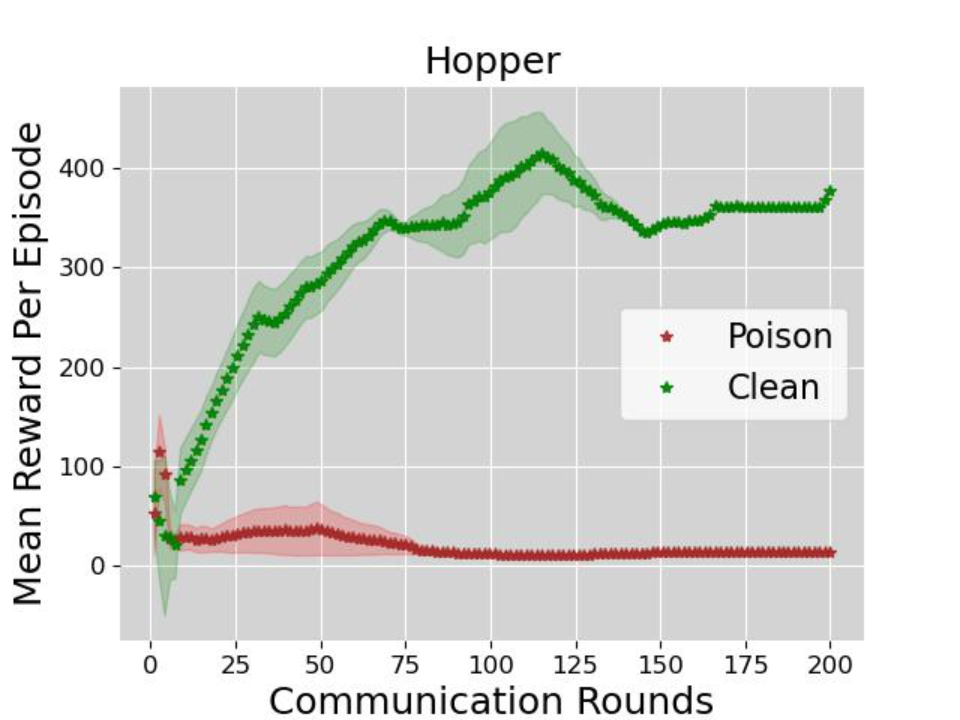}
    \includegraphics[height=1.2in]{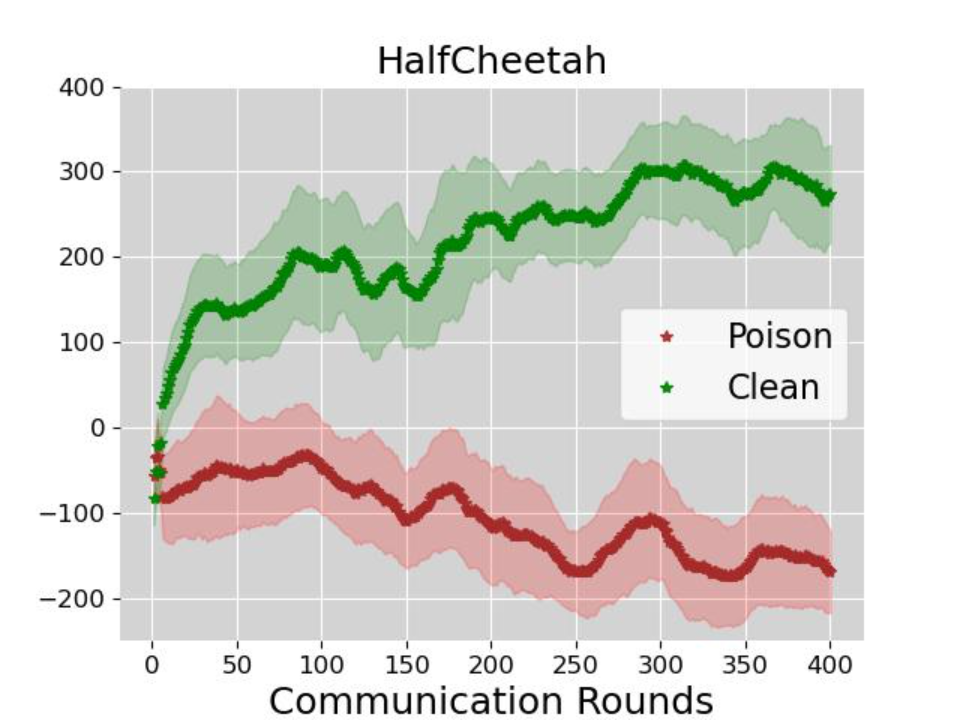}
    \caption{Contrastive performance between poisoned VPG system with high budget ($\eps = 100$) and a clean system of same agent size. One attacker can successfully poison a system composed of at least 100 agents.}
    \label{fig:vpg-r100}
\end{figure}

\smallskip
\noindent
\textbf{Attacker proportion generalization.} We have shown that the proportion of malicious agents required to poison a system is consistent regardless of the size of the system. We demonstrate this by analyzing the combination of the VPG algorithm and CartPole task with a low budget of $\eps=1$. We found that when we increase the system size from 4 agents to 100 agents, a fixed proportion of attackers can always poison the system successfully. This result is depicted in Figure (\ref{fig:vpg-gen}).

\begin{figure}[H]
    \centering
    \includegraphics[height=1.2in]{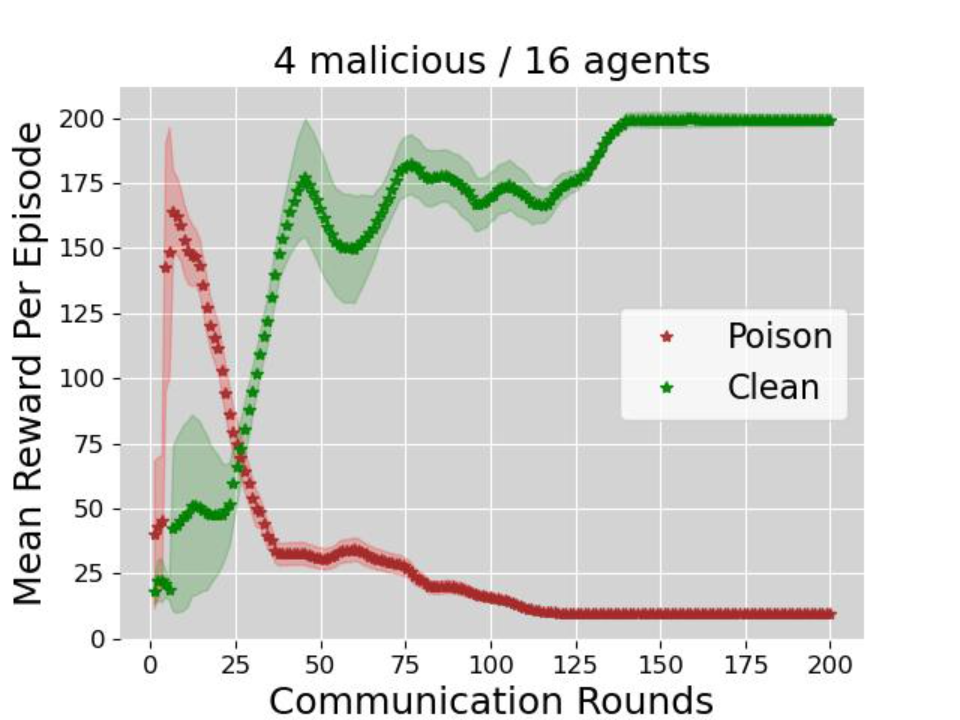}
    \includegraphics[height=1.2in]{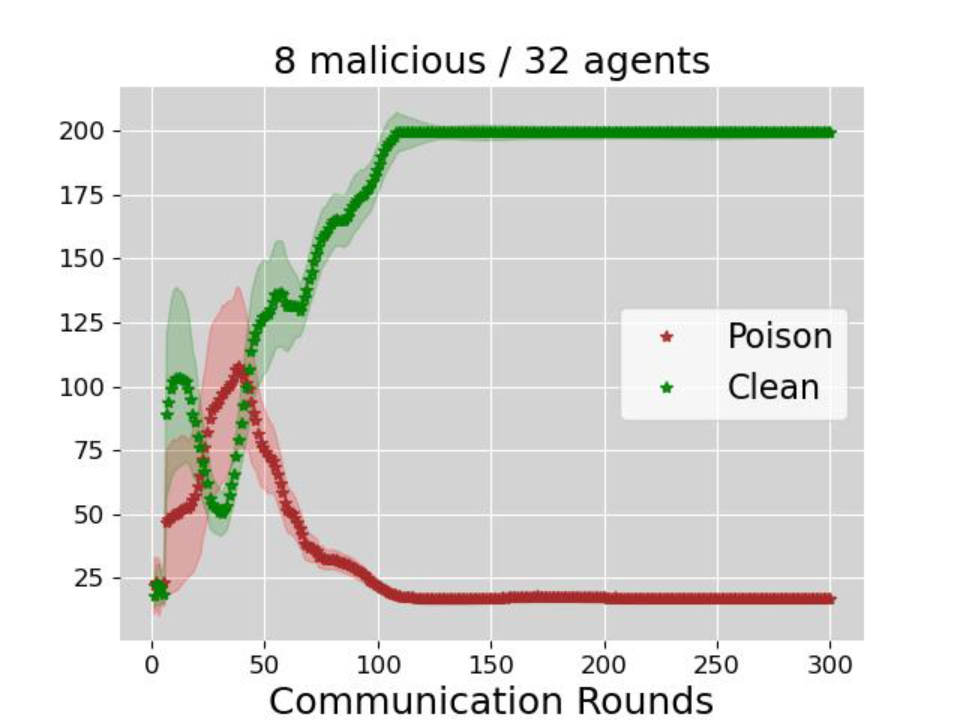}
    \includegraphics[height=1.2in]{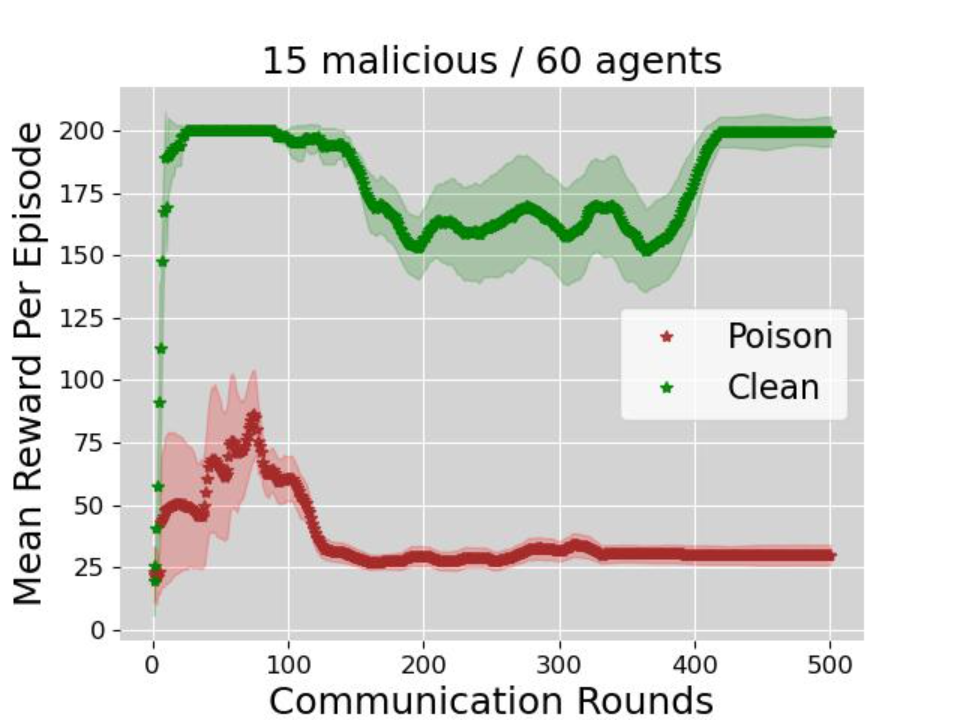}
    \includegraphics[height=1.2in]{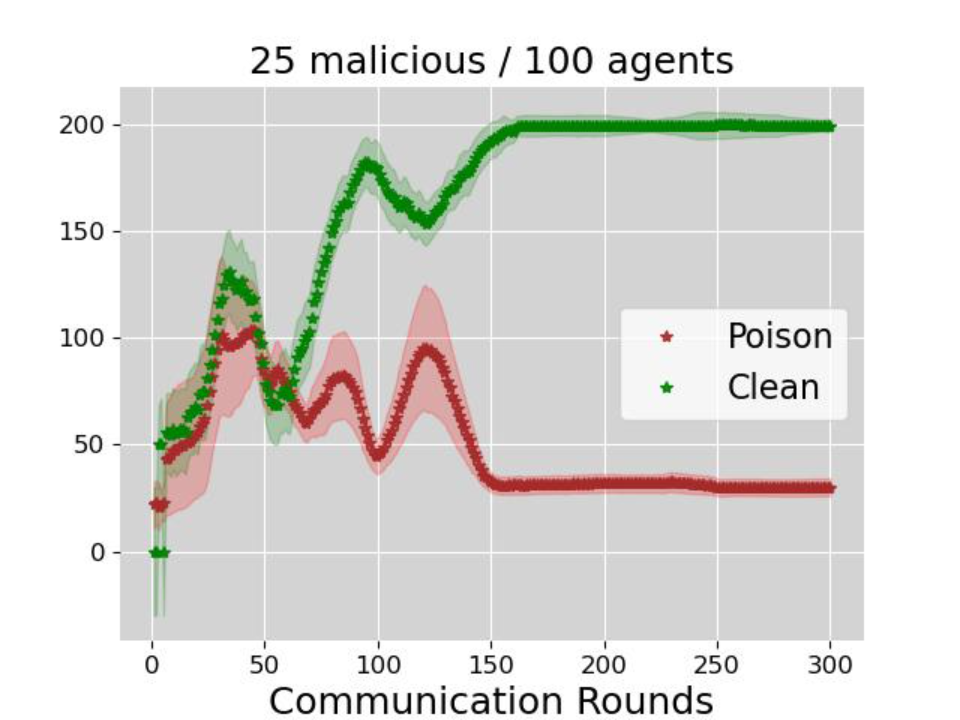}
    \caption{For the VPG algorithm and CartPole task with low budget $\eps$ = 1, while the size grows, the system can always be effectively poisoned given a fixed proportion of malicious agents.}
    \label{fig:vpg-gen}
\end{figure}

\smallskip
\noindent
\textbf{Targeted attack.} For simplicity, we choose the target policy to be a single action, whether in discrete or continuous space. Results for two discrete and two continuous environments, with varying cardinality and dimensions for the action space, are shown in Figure (\ref{fig:target}).

\smallskip
\noindent
\textbf{Single-critic attack.} For our proposed PPO method, attackers have a pair of public and private critics, among which the former is poisoned and sent to the central server, while the latter is clean and used to attack rewards. We test an alternative with only one critic that both attacks and has its rewards poisoned. To test the effectiveness of our two-critic PPO attack, we also test a single-critic approach, in which the critic poisons the rewards for both the policy and itself using its value function and gets aggregated and updated by the central server after each round. Using the same runs as presented above, we compare this alternative attack compared to ours. In Figure (\ref{fig:critic}), we can see that maintaining distinct private and public critics is superior and contributes significantly to the success of our method.

\end{document}